\newtheorem{theorem}{Theorem}[section]
\newtheorem{corollary}[theorem]{Corollary}
\newtheorem{fact}[theorem]{Fact}
\newtheorem{lemma}[theorem]{Lemma}
\newtheorem{proposition}[theorem]{Proposition}
{\theorembodyfont{\rmfamily}\newtheorem{definition}[theorem]{Definition}}
{\theorembodyfont{\rmfamily}}
{\theorembodyfont{\rmfamily}\newtheorem{remark}[theorem]{Remark}}
\theoremstyle{break}
{\theorembodyfont{\rmfamily}}
\newcommand{\A}{\mathcal{A}}
\newcommand{\F}{\mathcal{F}}
\newcommand{\G}{\mathcal{G}}
\newcommand{\cH}{\mathcal{H}}
\newcommand{\cP}{\mathcal{P}}
\newcommand{\RR}{\mathbb{R}}
\newcommand{\Probl}{\Pi}
\newcommand{\dist}{\mathcal{D}}
\newcommand{\distr}{\mathcal{D}}
\newcommand{\cost}{\text{\sc cost}}
\newcommand{\prob}[2][]{\text{\bf Pr}\ifthenelse{\not\equal{}{#1}}{_{#1}}{}\!\left[#2\right]}
\newcommand{\expect}[2][]{\text{\bf E}\ifthenelse{\not\equal{}{#1}}{_{#1}}{}\!\left[#2\right]}
\newcommand{\sse}{\subseteq}
\newcommand{\eps}{\epsilon}
\newcommand{\ed}{(\eps,\delta)}
\newcommand{\kb}{(\kappa,\beta)}
\def\a{\alpha}
\def\b{\beta}
\def\df{\nabla f}
\def\grad{{\nabla}}
\def\la{\langle}
\def\ra{\rangle}
\def\t{\tau}
\DeclareMathOperator{\poly}{poly}
\newenvironment{proof}{\noindent {\em {Proof:}}}{$\blacksquare$\vskip \belowdisplayskip}
\newcommand{\Ch}[1]{\hyperref[ch:#1]{Chapter~\ref*{ch:#1}}} 
\newcommand{\Sec}[1]{\hyperref[sec:#1]{Section~\ref*{sec:#1}}} 
\newcommand{\Eqn}[1]{\hyperref[eq:#1]{(\ref*{eq:#1})}} 
\newcommand{\Fig}[1]{\hyperref[fig:#1]{Figure~\ref*{fig:#1}}} 
\newcommand{\Tab}[1]{\hyperref[tab:#1]{Table~\ref*{tab:#1}}} 
\newcommand{\Thm}[1]{\hyperref[thm:#1]{Theorem~\ref*{thm:#1}}} 
\newcommand{\Lem}[1]{\hyperref[lem:#1]{Lemma~\ref*{lem:#1}}} 
\newcommand{\Prop}[1]{\hyperref[prop:#1]{Prop.~\ref*{prop:#1}}} 
\newcommand{\Cor}[1]{\hyperref[cor:#1]{Corollary~\ref*{cor:#1}}} 
\newcommand{\Def}[1]{\hyperref[def:#1]{Definition~\ref*{def:#1}}} 
\newcommand{\Alg}[1]{\hyperref[alg:#1]{Algorithm~\ref*{alg:#1}}} 
\newcommand{\Ex}[1]{\hyperref[ex:#1]{Example~\ref*{ex:#1}}} 
\newcommand{\Clm}[1]{\hyperref[clm:#1]{Claim~\ref*{clm:#1}}} 
\newcommand{\App}[1]{\hyperref[#1]{Appendix~\ref*{#1}}} 
\newcommand{\Res}[1]{\hyperref[res:#1]{Result~\ref*{res:#1}}} 
\newcommand{\Rem}[1]{\hyperref[rem:#1]{Remark~\ref*{rem:#1}}} 
\newcommand{\Secc}[1]{\hyperref[#1]{Section~\ref*{#1}}} 
\newcommand{\Seccs}[1]{\hyperref[#1]{Sections~\ref*{#1}}} 
\newcommand{\Appp}[1]{\hyperref[#1]{Appendix~\ref*{#1}}} 
\newcommand{\Lemm}[1]{\hyperref[#1]{Lemma~\ref*{#1}}} 
\newcommand{\Deff}[1]{\hyperref[#1]{Definition~\ref*{#1}}} 
\newcommand{\Thmm}[1]{\hyperref[#1]{Theorem~\ref*{#1}}} 
\newcommand{\Propp}[1]{\hyperref[#1]{Proposition~\ref*{#1}}} 
\newcommand{\Figg}[1]{\hyperref[#1]{Figure~\ref*{#1}}} 
\newcommand{\Labb}[2]{\hyperref[#2]{#1~\ref*{#2}}} 
\title{A PAC Approach to Application-Specific Algorithm Selection%
\thanks{A preliminary version of this paper appeared in the Proceedings of the
  7th Innovations in Theoretical Computer Science Conference, January 2016.  This research was supported in part
    by NSF Awards CCF-1215965 and CCF-1524062.}}
\date{}
\author{
  Rishi Gupta  \\\\
    Stanford University\\ {\tt rishig@cs.stanford.edu} \and
  Tim Roughgarden \\\\
    Stanford University\\ {\tt tim@cs.stanford.edu}}
\begin{document}

\maketitle

\begin{abstract}
The best algorithm for a computational problem
generally depends on the ``relevant inputs,'' a concept that depends
on the application domain and often defies formal
articulation.  While there is a large literature on
empirical approaches to selecting the best algorithm for a given
application domain, there has been surprisingly little theoretical
analysis of the problem.

This paper adapts concepts from statistical and online learning theory
to reason about application-specific algorithm selection.
Our models capture several state-of-the-art empirical and
theoretical approaches to the problem, ranging from self-improving
algorithms to
empirical performance models, and our results identify conditions
under which these approaches are guaranteed to perform well.
We present one framework that models algorithm selection as a
statistical learning problem, and our work here
shows that dimension notions
from statistical learning theory, historically used to measure the
complexity of classes of binary- and real-valued functions, are
relevant in a much broader algorithmic context.
We also study the online version of the algorithm selection problem,
and give possibility and impossibility results for the existence of
no-regret learning algorithms.
\end{abstract}


\section{Introduction}







Rigorously comparing algorithms is hard.
The most basic reason for this is that two different algorithms for a
computational problem generally have incomparable
performance: one algorithm
is better on some inputs, but worse on the others.
How can a theory advocate one of the algorithms over the other?
The simplest and most common solution in the theoretical analysis of
algorithms is to summarize the performance of an algorithm using a
single number, such as its worst-case performance or its average-case
performance with respect to an input distribution.
This approach effectively advocates using the algorithm with the best
summarizing value (e.g., the smallest worst-case running time).

Solving a problem ``in practice'' generally means identifying an
algorithm that works well for most or all instances of interest.
When the ``instances of interest'' are easy
to specify formally in advance --- say, planar graphs
---  the traditional analysis approaches
often give accurate performance predictions and identify useful
algorithms.
However, instances of interest commonly
possess domain-specific features that defy formal articulation.
Solving a problem in practice can require selecting an algorithm
that is optimized for the specific application domain, even though the
special structure of its instances is not well understood.
While there is a large literature, spanning numerous communities, on
empirical approaches to algorithm selection
(e.g.~\cite{fink,H+14,horvitz,huang,kotthoff,LNS09}), there
has been surprisingly little theoretical analysis of the problem.
One possible explanation is that worst-case analysis, which is the
dominant algorithm analysis paradigm in theoretical computer science,
is deliberately application-agnostic.

This paper demonstrates that application-specific algorithm selection
can be usefully modeled as a learning problem.
Our models are straightforward to understand, but also expressive
enough to capture several existing approaches in the
theoretical computer science and AI communities, ranging from the
design and analysis of self-improving algorithms~\cite{sesh} to the
application of empirical performance models~\cite{H+14}.

We present one framework that models algorithm selection as a
statistical learning problem in the spirit of Haussler~\cite{haussler}.
We prove that many useful
families of algorithms,
including broad classes of greedy and local search heuristics,
have small pseudo-dimension and hence low generalization error.
Previously,
the pseudo-dimension (and the VC dimension,
fat shattering dimension, etc.) has been used almost exclusively
to quantify the complexity of classes of prediction functions
(e.g.~\cite{haussler,AB}).\footnote{A few exceptions:
Srebro and Ben-David~\cite{SB06} use the pseudo-dimension to study the
problem of learning a good kernel for use in a support vector machine,
Long~\cite{long}
  parameterizes the performance of the randomized rounding of packing
  and covering linear programs by the pseudo-dimension of a set
  derived from the constraint matrix, and
  Mohri and Medina~\cite{MM14} and Morgenstern and
  Roughgarden~\cite{MR15} use dimension notions from learning theory
  to bound the
  sample   complexity of learning approximately revenue-maximizing
  truthful auctions.}
Our results demonstrate that this concept is
useful and relevant in a much broader algorithmic context.
It also offers a novel approach to
formalizing the oft-mentioned but rarely-defined
``simplicity''  of a family of algorithms.

We also study regret-minimization in the online version of the
algorithm selection problem.  We show that the ``non-Lipschitz''
behavior of natural algorithm classes precludes learning algorithms
that have no regret in the worst case, and prove positive results
under smoothed analysis-type assumptions.

\paragraph{Paper Organization}
\Secc{s:scenarios} outlines a number of concrete problems that
motivate the present work, ranging from greedy heuristics to SAT
solvers, and from self-improving algorithms to parameter tuning.  The
reader interested solely in the technical development can skip this
section with little loss.
\Secc{s:basic} models the task of determining the best
application-specific algorithm as a PAC learning problem, and brings
the machinery of statistical learning theory to bear on a wide class
of problems, including greedy heuristic selection, sorting,
and gradient descent step size selection.
A time-limited reader can glean the gist of
our contributions from \Seccs{ss:basic}--\ref{sss:greedy_erm}.
\Secc{s:noregret} considers the problem of learning an
application-specific algorithm online, with the goal of minimizing
regret.
\Seccs{ss:lb} and~\ref{ss:ub} present negative and positive
results for worst-case and smoothed instances, respectively.
\Secc{s:conc} concludes with a number
of open research directions.

\enlargethispage*{\baselineskip}
\section{Motivating Scenarios}\label{s:scenarios}

Our learning framework sheds light on several well-known approaches,
spanning disparate application domains, to
the problem of learning a good algorithm from data.
To motivate and provide interpretations of our results, we describe
several of these in detail.

\subsection{Example \#1: Greedy Heuristic Selection}\label{ss:greedy}


One of the most common and also most challenging motivations for
algorithm selection is presented by computationally difficult
optimization problems.  When the available computing resources are
inadequate to solve such a problem exactly, heuristic algorithms must
be used.  For most hard problems, our understanding of when
different heuristics work well remains primitive.
For concreteness, we describe one current and high-stakes example of
this issue, which
also aligns well with our model and results in
\Secc{ss:greedy2}.  The computing and operations research
literature has many similar examples.

The FCC is currently (in 2016) running a novel double auction to buy back
licenses for spectrum from certain television broadcasters and resell
them to telecommunication companies for wireless broadband
use. The auction is expected to generate over \$20 billion dollars for
the US government~\cite{fcc}.
The ``reverse'' (i.e., buyback) phase of the
auction must determine
which stations to buy out (and what to pay them).  The auction is
tasked with buying out sufficiently many stations so that the
remaining stations (who keep their licenses) can be ``repacked'' into a
small number of channels, leaving a target number of channels free
to be repurposed for wireless broadband.
To first order, the feasible repackings are determined by
interference constraints between stations.
Computing a repacking therefore resembles familiar hard
combinatorial problems like the independent set and graph coloring
problems.
The reverse auction
uses a greedy heuristic to compute the order in
which stations are removed from the reverse auction (removal means the
station keeps its license)~\cite{MS14}.
The chosen heuristic
favors stations with high value, and discriminates against stations that
interfere with a large number of other stations.\footnote{Analogously,
  greedy heuristics for the maximum-weight independent set problem
  favor vertices with higher weights and with lower
  degrees~\cite{STY03}.  Greedy heuristics for
  welfare maximization in combinatorial auctions prefer bidders with
  higher values and smaller demanded bundles~\cite{LOS02}.}
There are many ways of combining these two criteria,
and no obvious reason to favor one specific implementation over another.
The specific implementation in the FCC auction
has been justified through
trial-and-error experiments using synthetic instances that are thought
to be representative~\cite{MS14}.
One interpretation of our results in \Secc{ss:greedy2} is as a
post hoc
justification of this exhaustive approach for sufficiently simple
classes of algorithms, including the greedy heuristics considered for
this FCC auction.

\subsection{Example \#2: Self-Improving Algorithms}\label{ss:sorting}


The area of {\em self-improving algorithms} was initiated by
Ailon et al.\cite{sesh}, who considered sorting and clustering
problems.  Subsequent
work~\cite{CS08,CMS10,CMS12} studied several problems in low-dimensional
geometry, including the maxima and convex hull problems.
For a given problem,
the goal is to design an algorithm that,
given a sequence of i.i.d.\ samples from an unknown distribution over
instances, converges to the optimal algorithm for that distribution.
In addition, the algorithm should use only a small amount of
auxiliary space.
For example, for sorting independently distributed array
entries, the algorithm in~Ailon et al.\cite{sesh} solves each instance (on $n$
numbers) in $O(n \log n)$ time, uses space $O(n^{1+c})$
(where $c > 0$ is an arbitrarily small constant),
and after a polynomial number of
samples has expected running time within a constant factor of that of an
information-theoretically optimal algorithm for the unknown
input distribution.  \Secc{ss:sorting2} reinterprets
self-improving algorithms via our general framework.

\subsection{Example \#3: Parameter Tuning
in Optimization and Machine Learning}\label{ss:gd}

Many ``algorithms'' used in practice are really meta-algorithms, with
a large number of free parameters that need to be instantiated by the
user.  For instance, implementing even in the most basic version of
gradient descent requires choosing a step size and error tolerance.
For a more extreme version, CPLEX, a widely-used commercial linear and
integer programming solver, comes with a 221-page parameter reference
manual describing 135 parameters~\cite{xu11}.

An analogous problem in machine learning is ``hyperparameter
optimization,'' where the goal is to tune the parameters of a learning
algorithm so that it learns (from training data) a model with high
accuracy on test data, and in particular a model that does not overfit
the training data.  A simple example is regularized regression,
such as ridge regression, where a single parameter governs the
trade-off between the accuracy of the learned model on training data and
its ``complexity.''  More sophisticated learning algorithms can have
many more parameters.

Figuring out the ``right'' parameter values is notoriously challenging
in practice.  The CPLEX manual simply advises that ``you may need to
experiment with them.''  In machine
learning, parameters are often set by discretizing and then
applying
brute-force search (a.k.a.\ ``grid search''), perhaps with random
subsampling (``random search'')~\cite{BB12}.  When this is
computationally infeasible, variants of gradient descent are often
used to explore the parameter space, with no guarantee of converging
to a global optimum.

The results in \Secc{ss:gd2} can be interpreted as
a sample
complexity analysis of grid search for the problem of choosing the
step size in gradient descent to minimize the expected number of
iterations needed for convergence.
We view this as a first step toward reasoning more generally about the
problem of learning good parameters for machine learning algorithms.

\subsection{Example \#4: Empirical Performance Models for SAT
  Algorithms}\label{ss:sat}

The examples above already motivate selecting an algorithm for a
problem based on characteristics of the application domain.
A more ambitious and refined approach is to select an algorithm on
a {\em per-instance} (instead of a per-domain) basis.
While it's impossible to memorize the best algorithm for every possible
instance, one might hope to use coarse {\em features} of a problem
instance as a guide to which algorithm is likely to work well.

For example, Xu et al.~\cite{xu} applied this idea to the
satisfiability (SAT) problem.
Their algorithm portfolio consisted of seven
state-of-the-art SAT solvers with incomparable and
widely varying running times across different instances.
The authors identified a number of instance features,
ranging from simple
features like input size and clause/variable ratio, to complex
features like Knuth's estimate of the search tree size~\cite{knuth} and
the rate of progress of local search probes.\footnote{It is important,
  of course, that computing the features of an instance
is an easier problem than solving it.}
The next step involved building an ``empirical performance model'' (EPM) for
each of the seven algorithms in the portfolio --- a mapping from instance
feature vectors to running time predictions.  They then
computed their
EPMs using labeled training data and a suitable regression model.
With the EPMs in hand,
it is clear how to perform per-instance algorithm selection: given an
instance, compute its features,
use the EPMs to predict the running time of each algorithm
in the portfolio, and run the algorithm with the smallest predicted
running time.  Using these ideas (and several optimizations), their
``SATzilla'' algorithm won numerous medals at the 2007
SAT Competition.\footnote{See Xu et al.~\cite{xu2} for details on the latest
  generation of their solver.}
\Secc{ss:features} outlines how to extend our PAC learning
framework to reason about EPMs and feature-based algorithm selection.

\section{PAC Learning an Application-Specific Algorithm}\label{s:basic}

This section casts the problem of selecting the best
algorithm for a poorly understood application domain as one of
learning the
optimal algorithm with respect to an unknown instance distribution.
\Secc{ss:basic} formally defines the basic model,
\Secc{ss:pseudo} reviews
relevant preliminaries from statistical learning theory,
\Secc{ss:greedy2} bounds the pseudo-dimension of many classes
of greedy and local search heuristics,
\Secc{ss:sorting2} re-interprets the theory of self-improving
algorithms via our framework,
\Secc{ss:features} extends the basic model to capture empirical
performance models and feature-based algorithm selection,
and \Secc{ss:gd2} studies step size selection in gradient descent.


\subsection{The Basic Model}\label{ss:basic}

Our basic model consists of the following ingredients.
\begin{enumerate}

\item A fixed computational or optimization problem $\Probl$.  For
  example, $\Probl$ could be computing a maximum-weight independent
  set of a graph (\Secc{ss:greedy}), or sorting $n$ elements
  (\Secc{ss:sorting}).

\item An unknown distribution $\distr$ over instances $x \in
\Probl$.

\item A set $\A$ of algorithms for $\Probl$; see
  \Seccs{ss:greedy2} and~\ref{ss:sorting2} for concrete
  examples.

\item A performance measure $\cost:\A \times \Probl \rightarrow [0,H]$
  indicating the performance of a given algorithm on a given instance.
Two common choices for $\cost$ are the running time of an algorithm,
and, for optimization problems, the objective function value of the
solution produced by an algorithm.


\end{enumerate}

The ``application-specific information'' is encoded by the unknown
input distribution $\distr$, and the corresponding ``application-specific
optimal algorithm'' $A_{\distr}$ is the algorithm that minimizes or maximizes (as appropriate)
$\expect[x \in \distr]{\cost(A,x)}$ over $A \in \A$.
The {\em error} of an algorithm $A \in \A$ for a
distribution $\distr$ is
$$
\bigl\lvert\expect[x \sim \distr]{\cost(A,x)}
-
\expect[x \sim \distr]{\cost(A_{\distr},x)}\bigr\rvert.
$$

In our basic model, the goal is:
\begin{itemize}

\item [] {\em Learn the application-specific optimal algorithm from data
  (i.e., samples from $\distr$)}.

\end{itemize}
More precisely, the learning algorithm is given $m$ i.i.d.\ samples
$x_1,\ldots,x_m \in \Probl$ from $\dist$, and (perhaps implicitly) the
corresponding performance
$\cost(A,x_i)$ of each algorithm $A \in \A$ on each input
$x_i$.
The learning algorithm uses this information to suggest an algorithm
$\hat{A} \in \A$ to use on future inputs drawn from~$\dist$.
We seek learning algorithms that almost always output an algorithm of
$\A$ that performs almost as well as the optimal algorithm in $\A$ for
$\dist$.

\begin{definition}\label{d:learnopt}
A learning algorithm $L$ {\em $(\eps,\delta)$-learns the
  optimal algorithm in $\A$ from $m$ samples} if, for every distribution
$\dist$ over $\Probl$, with probability at least $1-\delta$ over $m$
samples $x_1,\ldots,x_m \sim \dist$, $L$ outputs an algorithm
$\hat{A} \in \A$ with error at most $\eps$.
\end{definition}


\subsection{Pseudo-Dimension and Uniform Convergence}\label{ss:pseudo}

PAC learning an optimal algorithm, in the sense
of \Deff{d:learnopt}, reduces to bounding the ``complexity''
of the class $\A$ of algorithms.
We next review the relevant definitions from statistical learning
theory.

Let $\cH$ denote a set of real-valued functions defined on the set $X$.
A finite subset $S = \{x_1,\ldots,x_m\}$ of $X$ is {\em
  (pseudo-)shattered} by $\cH$ if there exist real-valued {\em
  witnesses} $r_1,\ldots,r_m$ such that, for each of the $2^m$ subsets
$T$ of $S$, there exists a function $h \in \cH$ such that
$h(x_i) > r_i$ if and only if $i \in T$ (for $i=1,2,\ldots,m$).
The {\em pseudo-dimension} of $\cH$ is the cardinality of the largest
subset shattered by $\cH$ (or $+\infty$, if arbitrarily large finite
subsets are shattered by $\cH$).  The pseudo-dimension is a natural
extension of the VC dimension from binary-valued to real-valued
functions.%
\footnote{The {\em fat shattering dimension} is another common
  extension of the VC dimension to real-valued functions. It is a weaker
  condition, in that the fat shattering dimension of $\mathcal{H}$ is
  always at most the pseudo-dimension of $\mathcal{H}$, that is still
  sufficient for sample complexity bounds. Most of our arguments give
  the same upper bounds on pseudo-dimension and fat shattering
  dimension, so we present the stronger statements.}

To bound the sample complexity of accurately estimating the
expectation of all functions in $\cH$, with respect to an arbitrary
probability distribution $\dist$ on $X$, it is enough to bound the
pseudo-dimension of $\cH$.
\begin{theorem}[{{Uniform Convergence (e.g.~\cite{AB})}}]\label{t:pseudo}
Let $\cH$ be a class of functions with domain $X$ and
range in $[0,H]$, and suppose $\cH$ has pseudo-dimension $d_{\cH}$.
For every distribution $\dist$ over $X$, every $\eps > 0$, and every $\delta \in
(0,1]$, if
\begin{equation}\label{eq:m}
m \ge
c\left(\frac{H}{\epsilon}\right)^2\left(d_{\cH}
      + \ln\left(\frac{1}{\delta}\right)\right)
\end{equation}
for a suitable constant $c$ (independent of all other
parameters),
then with probability at least $1-\delta$ over $m$ samples
$x_1,\ldots,x_m \sim \dist$,
$$
\left| \left( \frac{1}{m} \sum_{i=1}^m h(x_i) \right) - \expect[x \sim
  \dist]{h(x)} \right| < \eps
$$
for every $h \in \cH$.
\end{theorem}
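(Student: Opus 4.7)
The plan is to reduce the uniform convergence statement for the real-valued class $\cH$ to a counting problem controlled by the pseudo-dimension $d_{\cH}$, via the standard symmetrization and growth-function machinery. Concretely, I would follow the classical Vapnik-style argument adapted to real-valued functions, where pseudo-dimension plays the role that VC dimension plays for $\{0,1\}$-valued classes.

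First I would introduce a ``ghost sample'' $x_1',\ldots,x_m' \sim \dist$ independent of $x_1,\ldots,x_m$, and show by a standard symmetrization inequality that if any $h \in \cH$ has empirical mean on the real sample deviating from $\expect[x\sim\dist]{h(x)}$ by more than $\eps$, then with constant probability some $h \in \cH$ has empirical means on the two samples differing by more than $\eps/2$. This reduces the problem to bounding
\[
\prob{\sup_{h \in \cH} \Bigl| \tfrac{1}{m}\sum_{i=1}^m h(x_i) - \tfrac{1}{m}\sum_{i=1}^m h(x_i') \Bigr| > \eps/2 }.
\]
Next I would apply the random-swap (Rademacher-like) trick: conditional on the $2m$ points, swapping $x_i$ with $x_i'$ for each $i$ independently with probability $1/2$ does not change the distribution, so the above probability is bounded by the worst-case (over point sets of size $2m$) probability that a signed sum exceeds $\eps m/2$.

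The key step, where the pseudo-dimension enters, is to replace the supremum over the infinite class $\cH$ with a supremum over a finite set of ``behaviors'' on the $2m$ fixed points. For each candidate threshold vector $(r_1,\ldots,r_{2m})$ one looks at the induced binary pattern $(\mathbf{1}[h(x_i) > r_i])_i$; by the definition of pseudo-shattering combined with the Sauer--Shelah lemma applied to the class of binary patterns induced on $2m$ points, the number of distinct such patterns is at most $(e \cdot 2m / d_{\cH})^{d_{\cH}}$. Discretizing the range $[0,H]$ at scale $\eps/4$ (or equivalently using thresholds at the sample values themselves), this yields a bound of the form $(2m)^{O(d_{\cH})}$ on the effective number of functions one must consider. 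For each such effective function, Hoeffding's inequality applied to the signed sum of bounded $[0,H]$-variables gives a tail bound of $2\exp(-\Omega(\eps^2 m / H^2))$. A union bound over the $(2m)^{O(d_{\cH})}$ behaviors and algebraic manipulation to solve for $m$ gives the sample complexity~\eqref{eq:m}.

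The main obstacle is the discretization/thresholding step that bridges real-valued functions and the combinatorial pseudo-dimension bound: one must argue that approximating each $h$ within $\eps/4$ on the $2m$ sample points suffices, and that the number of such approximations is controlled by a Sauer--Shelah bound at the pseudo-shattering scale. Everything else (symmetrization, random swaps, Hoeffding, union bound) is routine; I would otherwise simply cite the textbook treatment in~\cite{AB}, since the statement is essentially their uniform convergence theorem specialized to functions bounded in $[0,H]$.
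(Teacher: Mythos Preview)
The paper does not prove this theorem at all; it is stated as a known result with a citation to~\cite{AB} and then used as a black box (via \Cor{ub}). Your sketch is a faithful outline of the standard symmetrization/Sauer--Shelah/Hoeffding argument that one finds in~\cite{AB}, and your closing remark that one could simply cite the textbook treatment is exactly what the paper does. So there is nothing to compare: the paper's ``proof'' is the citation, and your proposal correctly reconstructs what lies behind that citation.
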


We can identify each algorithm $A \in \A$ with the real-valued function $x
\mapsto \cost(A,x)$. Regarding the class $\A$ of algorithms as a set of
real-valued functions defined on $\Probl$, we can discuss its pseudo-dimension,
as defined above. We need one more definition before we can apply our machinery
to learn algorithms from $\mathcal{A}$.

\begin{definition}[Empirical Risk Minimization (ERM)]\label{d:erm}
 Fix an optimization problem $\Pi$, a performance measure $\cost$, and a set of
 algorithms $\mathcal{A}$. An algorithm $L$ is an {\em ERM algorithm} if, given any
 finite subset $S$ of $\Pi$, $L$ returns an (arbitrary) algorithm from $\mathcal{A}$ with
 the best average performance on $S$.
\end{definition}

For example, for any $\Pi$, $\cost$, and finite $\mathcal{A}$, there
is the trivial
ERM algorithm that simply computes the average performance of each algorithm on
$S$ by brute force, and returns the best one.  The next corollary follows easily from
\Deff{d:learnopt}, \Thmm{t:pseudo}, and \Deff{d:erm}.

\begin{corollary}\label{cor:ub}
Fix parameters $\eps>0$, $\delta \in (0,1]$, a set of problem
  instances $\Pi$, and a performance measure $\cost$.  Let $\A$ be a
  set of algorithms that has pseudo-dimension $d$ with respect to
  $\Pi$ and $\cost$. Then any ERM algorithm $(2\eps,\delta)$-learns
  the optimal algorithm in $\A$ from $m$ samples, where $m$ is defined
  as in~\eqref{eq:m}.
\end{corollary}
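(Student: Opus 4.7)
The plan is to invoke Theorem~\ref{t:pseudo} and then run the textbook empirical-risk-minimization-to-generalization argument, with $\A$ viewed as a class of real-valued functions $\{x \mapsto \cost(A,x) : A \in \A\}$ of range $[0,H]$ and pseudo-dimension $d$. First I would apply Theorem~\ref{t:pseudo} with error parameter $\eps$ and confidence $\delta$: the hypothesis on $m$ in~\eqref{eq:m} guarantees that, with probability at least $1-\delta$ over the draw of $x_1,\ldots,x_m \sim \dist$, every $A \in \A$ simultaneously satisfies
\[
\Bigl\lvert \tfrac{1}{m}\sum_{i=1}^m \cost(A,x_i) - \expect[x \sim \dist]{\cost(A,x)} \Bigr\rvert < \eps.
\]
I would condition on this uniform-convergence event for the remainder of the argument.

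Next, let $\hat{A}$ be an algorithm returned by the ERM procedure on $S = \{x_1,\ldots,x_m\}$ and let $A_\dist$ denote an application-specific optimal algorithm. Assume that $\cost$ is to be minimized; the maximization case is symmetric. By \Deff{d:erm}, the empirical average of $\hat{A}$ on $S$ is at most that of $A_\dist$ on $S$. Chaining this inequality with two applications of the uniform-convergence bound, one at each end, gives
\begin{align*}
\expect[x \sim \dist]{\cost(\hat{A},x)}
&< \tfrac{1}{m}\sum_{i=1}^m \cost(\hat{A},x_i) + \eps \\
&\le \tfrac{1}{m}\sum_{i=1}^m \cost(A_\dist,x_i) + \eps \\
&< \expect[x \sim \dist]{\cost(A_\dist,x)} + 2\eps.
\end{align*}
Since $A_\dist$ is optimal, the two expectations differ by a nonnegative quantity, so the error of $\hat{A}$ as defined in \Secc{ss:basic} is strictly less than $2\eps$, which is exactly the $(2\eps,\delta)$-learning guarantee of \Deff{d:learnopt}.

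I do not anticipate any real obstacle: the argument is the standard ERM-to-generalization reduction, transplanted to a setting in which the function class happens to be indexed by algorithms rather than by predictors or hypotheses. The only mildly noteworthy feature is the factor of two in the error bound, which is an unavoidable consequence of having to invoke uniform convergence twice---once to pass from sample to population for $\hat{A}$, and once to pass from population to sample for $A_\dist$.
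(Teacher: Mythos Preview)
Your proposal is correct and matches what the paper intends: the paper does not spell out a proof but simply states that the corollary ``follows easily from \Deff{d:learnopt}, \Thmm{t:pseudo}, and \Deff{d:erm},'' and your write-up is precisely the standard two-sided uniform-convergence sandwich that those references point to.
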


\Cor{ub} is only interesting if interesting classes of
algorithms $\mathcal{A}$ have small pseudo-dimension.
 In the simple case where $\A$ is finite, as in our example of an algorithm
portfolio for SAT (\Secc{ss:sat}), the
pseudo-dimension of $\A$ is trivially at most $\log_2 |\A|$.  The
following
sections demonstrate the much less obvious fact that natural infinite classes of
algorithms also have small pseudo-dimension.

\begin{remark}[Computational Efficiency]
The present work focuses on the sample complexity rather than the
  computational aspects of learning, so outside of a few remarks we won't say
  much about the existence or efficiency of ERM in our examples. A priori, an
  infinite class of algorithms may not admit any ERM algorithm at all, though all
  of the examples in this paper do have ERM algorithms under mild
  assumptions.
\end{remark}

\subsection{Application: Greedy Heuristics and Extensions}\label{ss:greedy2}

The goal of this section is to bound the pseudo-dimension of many
classes of greedy
heuristics including, as a special case, the family of heuristics relevant
for the FCC double auction described in \Secc{ss:greedy}.  It
will be evident that analogous computations are possible for many
other classes of heuristics, and we provide several extensions in
\Secc{sss:greedy} to illustrate this point.
Throughout this section, the performance measure $\cost$ is the
objective function value of the solution produced by a heuristic on an
instance, where we assume without loss of generality a maximization
objective.

\subsubsection{Definitions and Examples}\label{sss:def}

Our general definitions are motivated by greedy heuristics for
($NP$-hard) problems like the following; the reader will have no
difficulty coming up with additional natural examples.

\begin{enumerate}

\item {\em Knapsack.}  The input is $n$ items
  with values
  $v_1,\ldots,v_n$, sizes $s_1,\ldots,s_n$, and a knapsack capacity
  $C$.  The goal is to compute a subset $S \sse \{1,2,\ldots,n\}$ with
  maximum total value $\sum_{i \in S} v_i$, subject to having total
  size $\sum_{i \in S} s_i$ at most $C$.
Two natural greedy heuristics are to greedily pack items (subject to
feasibility) in order of nonincreasing value $v_i$, or in order of
nonincreasing density $v_i/s_i$ (or to take the better of the two, see
\Secc{sss:greedy}).


\item {\em Maximum-Weight Independent Set (MWIS).}
The input is an undirected graph $G=(V,E)$ and
a non-negative weight
$w_v$ for each vertex $v \in V$.  The goal is to compute the independent set ---
a subset of mutually non-adjacent vertices --- with maximum total
weight.  Two natural greedy heuristics are to greedily choose vertices
(subject to feasibility) in order of nonincreasing weight $w_v$, or
nonincreasing density $w_v/(1+\deg(v))$.  (The intuition for the
denominator is that choosing $v$ ``uses up'' $1+\deg(v)$ vertices ---
$v$ and all of its neighbors.)  The latter heuristic also has a
(superior) adaptive variant, where the degree $\deg(v)$
is computed in
the subgraph induced by the vertices not yet blocked from
consideration, rather than in the original graph.\footnote{An
  equivalent description is: whenever a vertex $v$ is added to the
  independent set, delete $v$ and its neighbors from the graph, and
  recurse on the remaining graph.}

\item {\em Machine Scheduling.}
This is a family of optimization
  problems, where $n$ jobs with various attributes (processing
  time, weight, deadline, etc.) need to be
  assigned to $m$ machines, perhaps subject to some constraints
  (precedence constraints, deadlines, etc.), to optimize some
  objective (makespan, weighted sum of completion times, number of
  late jobs, etc.).  A typical greedy heuristic for such a problem
  considers jobs in some order according to a score derived from the
  job parameters (e.g., weight divided by processing time), subject to
  feasibility, and always
  assigns the current job to the machine that currently has
  the lightest load (again, subject to feasibility).

\end{enumerate}

In general, we consider {\em object assignment problems}, where the
input is a set of $n$ objects with various attributes,
and the feasible solutions consist of assignments of the objects to a
finite set $R$, subject to feasibility constraints.
The attributes of an object are represented as an element $\xi$
of an abstract set.
For example, in the Knapsack problem $\xi$ encodes the
value and size of an object; in the MWIS problem, $\xi$ encodes
the weight and (original or residual) degree of a vertex.
In the Knapsack and MWIS problems, $R = \{0,1\}$, indicating whether
or not a given object is selected.
In machine scheduling problems, $R$
could be $\{1,2,\ldots,m\}$, indicating the machine to which a job is
assigned, or a richer set that also keeps track of the job ordering on
each machine.

By a {\em greedy heuristic}, we mean algorithms of the following form (cf., the
``priority algorithms'' of Borodin et al.~\cite{borodin}):
\begin{enumerate}

\item While there remain unassigned objects:

\begin{enumerate}

\item Use a {\em scoring rule} $\sigma$ (see below) to compute a score
  $\sigma(\xi_i)$ for each unassigned object $i$, as a function of its
  current attributes $\xi_i$.

\item For the unassigned object $i$ with the highest score, use an
  {\em assignment rule} to assign $i$ a value from $R$ and,
  if necessary, update the attributes of the other unassigned
  objects.\footnote{We
    assume that there is always as least one choice of assignment that
    respects the feasibility constraints; this holds for all of our
    motivating examples.}
For concreteness, assume that ties are always resolved
lexicographically.

\end{enumerate}

\end{enumerate}

A {\em scoring rule} assigns a real number to an object
as a function of its attributes.
Assignment rules that do not modify objects' attributes yield
non-adaptive greedy heuristics, which use only the original
attributes of each object
(like $v_i$ or $v_i/s_i$ in the
Knapsack problem, for instance).  In this case,
objects' scores
can be computed in advance of the main loop of the greedy heuristic.
Assignment rules that modify object attributes yield adaptive
greedy heuristics, such as the adaptive MWIS heuristic described above.

In a {\em single-parameter} family of scoring rules,
there is a scoring rule of the form $\sigma(\rho,\xi)$ for each
parameter value $\rho$ in some interval $I \sse \RR$.
Moreover, $\sigma$ is assumed to be continuous in $\rho$ for
each fixed value of $\xi$.
Natural examples include Knapsack scoring
rules of the form $v_i/s_i^{\rho}$ and MWIS scoring rules of the form
$w_v/(1+\deg(v))^{\rho}$ for $\rho \in [0,1]$ or $\rho \in
[0,\infty)$.
A single-parameter family of scoring rules is {\em
  $\kappa$-crossing} if, for each distinct pair of attributes $\xi,\xi'$,
there are at most
$\kappa$ values of $\rho$ for which $\sigma(\rho,\xi) =
\sigma(\rho,\xi')$.  For example, all of the scoring rules mentioned
above are 1-crossing rules.

For an example assignment rule, in the Knapsack and MWIS problems, the rule
simply assigns $i$ to ``1'' if it is feasible to do so, and to ``0''
otherwise.  A typical machine scheduling assignment rule assigns the
current job to the machine with the lightest load.
In the adaptive greedy heuristic for the MWIS problem, whenever the
assignment rule assigns ``1'' to a vertex $v$, it updates the residual
degrees of other unassigned vertices (two hops away) accordingly.

We call an assignment rule {\em $\beta$-bounded} if every object
$i$ is guaranteed to take on at most $\beta$ distinct attribute values.
For example, an assignment rule that never
modifies an object's attributes is 1-bounded.  The assignment rule in
the adaptive
MWIS algorithm is $n$-bounded, since it only modifies the degree of a
vertex (which lies in $\{0,1,2\ldots,n-1\}$).

Coupling a single-parameter family of $\kappa$-crossing scoring rules
with a fixed $\beta$-bounded assignment rule
yields a {\em $\kb$-single-parameter family of greedy
  heuristics}.  All of our running examples of greedy heuristics are
$(1,1)$-single-parameter families, except for the adaptive MWIS
heuristic, which is a $(1,n)$-single-parameter family.

\subsubsection{Upper Bound on Pseudo-Dimension}

We next show that every $\kb$-single-parameter
family of greedy heuristics has small pseudo-dimension.
This result
applies to all of the concrete examples mentioned above,
and it is easy to come up with other examples (for the problems already
discussed, and for additional problems).

\begin{theorem}[Pseudo-Dimension of Greedy Algorithms]\label{t:greedy}
If $\A$ denotes a $\kb$-single-para\-meter family of greedy
heuristics for an object assignment problem with $n$ objects, then the
pseudo-dimension of $\A$ is $O(\log (\kappa\beta n))$.
\end{theorem}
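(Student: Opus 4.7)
\begin{proofsketch}
The plan is to parameterize the family $\mathcal{A}$ by the real parameter $\rho \in I$ of the scoring rule $\sigma(\rho,\xi)$, view $\mathrm{cost}(A_\rho,x)$ as a function of $\rho$ for each fixed instance $x$, and then bound the number of sign patterns this one-parameter family can realize on any set of $m$ instances with witnesses. The pseudo-dimension bound will then pop out by comparing $2^m$ to the sign pattern count.

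First I would show that for every fixed instance $x$ (with $n$ objects), the map $\rho \mapsto \mathrm{cost}(A_\rho,x)$ is piecewise constant in $\rho$, with at most $N = O(\kappa\beta^2 n^2)$ pieces. The idea is that the execution of the greedy heuristic on $x$ depends on $\rho$ only through the outcomes of comparisons of scores $\sigma(\rho,\xi)$ versus $\sigma(\rho,\xi')$ between attribute values $\xi,\xi'$ that arise during the run. Let $\Xi_x$ denote the union, over all $\rho \in I$, of the set of attribute values that arise for some object during the execution on $x$. By the $\beta$-bounded property, each of the $n$ objects contributes at most $\beta$ distinct attributes, so $|\Xi_x| \le n\beta$. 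By the $\kappa$-crossing property and continuity of $\sigma(\cdot,\xi)$, for each of the at most $\binom{n\beta}{2}$ unordered pairs of attributes in $\Xi_x$, there are at most $\kappa$ values of $\rho$ at which the scores coincide. Call these the critical values of $\rho$ for $x$; there are at most $N$ of them. On any open interval between consecutive critical values, every comparison the algorithm could ever make has a constant outcome, so induction on the greedy steps shows the entire execution (and hence the cost) is constant on that interval.

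Next, given any candidate shattered set $x_1,\dots,x_m$ with witnesses $r_1,\dots,r_m$, the vector $(\mathbf{1}[\mathrm{cost}(A_\rho,x_i) > r_i])_{i=1}^m$ is piecewise constant in $\rho$ and can only change value when $\rho$ crosses a critical value for some $x_i$. Taking the union of critical values across instances yields at most $mN$ breakpoints, hence at most $mN+1$ distinct sign vectors realizable by varying $\rho$. For shattering we need $2^m$ distinct vectors, so
\[
2^m \;\le\; mN + 1 \;=\; O(m\,\kappa\beta^2 n^2),
\]
which, after taking logarithms, gives $m = O(\log(\kappa\beta n))$, as claimed.

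The main obstacle is step one: one has to be careful that the set $\Xi_x$ is well-defined and bounded by $n\beta$ \emph{across all values of $\rho$} simultaneously, not merely for a single execution. This is exactly what the $\beta$-bounded hypothesis on the assignment rule buys us, and once $\Xi_x$ is pinned down the remaining counting is straightforward. The rest of the proof is a standard Sauer-type counting argument tailored to a one-parameter family.
\end{proofsketch}
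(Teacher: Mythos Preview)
Your proposal is correct and follows essentially the same argument as the paper: partition the parameter interval $I$ at the $\kappa$-crossing points of all relevant attribute pairs, observe that the greedy execution (and hence the cost) is constant on each resulting subinterval, and compare the number of subintervals to $2^m$. The only cosmetic difference is that you count crossings per instance and then take the union (getting $O(m\,\kappa n^2\beta^2)$ breakpoints), whereas the paper pools all $sn\beta$ object--attribute pairs across the entire sample at once (getting $O(\kappa s^2 n^2\beta^2)$ breakpoints); both yield the same $O(\log(\kappa\beta n))$ bound.
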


In particular, all of our running examples are classes
of heuristics with pseudo-dimension $O(\log n)$.

\vspace{.1in}

\begin{proof}
Recall from the definitions (\Secc{ss:pseudo}) that we need
to upper bound the size of every set that is shatterable using
the greedy heuristics in $\A$.
For us, a set is a fixed set of $s$ inputs (each with $n$ objects)
$S = x_1,\ldots,x_s$.
For a potential witness $r_1,\ldots,r_s \in \RR$,
every algorithm $A \in \A$ induces a binary labeling of each
sample $x_i$, according to whether $\cost(A,x_i)$ is strictly more
than or at most $r_i$.
We proceed to bound from above the number of distinct binary
labellings of $S$ induced by the algorithms of $\A$, for any
potential witness.

Consider ranging over algorithms $A \in \A$ --- equivalently, over
parameter values $\rho \in I$.
The trajectory of a greedy heuristic $A \in \A$ is uniquely
determined by the outcome of the comparisons between the current
scores of the unassigned objects in each iteration of the algorithm.
Since the family uses a $\kappa$-crossing scoring rule, for every pair
$i,j$ of distinct objects and possible attributes $\xi_i,\xi_j$,
there are at most $\kappa$ values of $\rho$ for which there is a tie
between the score of $i$ (with attributes $\xi_i$) and that of $j$
(with attributes $\xi_j$).  Since $\sigma$ is continuous in
$\rho$, the relative order of the score of $i$ (with $\xi_i$) and $j$
(with $\xi_j$) remains the same in the open interval between
two successive values of $\rho$ at which their scores are
tied.
The upshot is that we can partition $I$ into at most
$\kappa+1$ intervals such that the outcome of the comparison between $i$
(with attributes $\xi_i$) and $j$ (with attributes $\xi_j$) is
constant on each interval.\footnote{This argument assumes that $\xi_i
  \neq \xi_j$.  If $\xi_i =
  \xi_j$, then because we break ties between equal scores
  lexicographically, the outcome of the comparison between
  $\sigma(\xi_i)$ and $\sigma(\xi_j)$ is in fact constant on the entire
  interval $I$ of parameter values.}

Next, the $s$ instances of $S$ contain a total of $sn$ objects.
Each of these objects has some initial attributes.
Because the assignment rule is $\beta$-bounded, there are at most
$sn\beta$ object-attribute pairs $(i,\xi_i)$ that could possibly
arise in the execution of any algorithm from $\A$ on any instance of
$S$.  This implies that, ranging across all algorithms of $\A$ on all
inputs in $S$, comparisons are only ever made between at most
$(sn\beta)^2$ pairs of object-attribute pairs (i.e., between an object
$i$ with current attributes $\xi_i$ and an object $j$ with current
attributes $\xi_j$).  We call these the {\em relevant comparisons}.

For each relevant comparison, we can partition $I$ into at most $\kappa+1$
subintervals such that the comparison outcome is constant (in $\rho$)
in each subinterval.  Intersecting the partitions of all of the
at most $(sn\beta)^2$ relevant comparisons splits $I$ into at most
$(sn\beta)^2\kappa + 1$ subintervals such that {\em every}
relevant comparison
is constant in each subinterval.  That is, all of the algorithms of
$\A$ that correspond to the parameter values $\rho$ in such a
subinterval execute identically on every input in $S$.  The number of
binary labellings of $S$ induced by algorithms of $\A$ is trivially at
most the number of such subintervals.
Our upper bound $(sn\beta)^2\kappa + 1$ on the number of subintervals
exceeds $2^s$, the requisite number of labellings to shatter $S$,
only if $s = O(\log (\kappa\beta n))$.
\end{proof}

\Thmm{t:greedy} and \Cor{ub} imply that, if
$\kappa$ and $\beta$ are bounded above by a polynomial in $n$,
then an ERM algorithm $\ed$-learns the optimal algorithm in~$\A$ from only
$m = \tilde{O}(\tfrac{H^2}{\eps^2})$ samples,%
\footnote{The notation $\tilde{O}(\cdot)$ suppresses logarithmic factors.}
where $H$ is the largest
objective function value of a feasible solution output by an algorithm
of $\A$ on an instance of $\Probl$.\footnote{Alternatively, the
  dependence of $m$ on $H$ can be removed if learning error $\eps H$
  (rather than $\eps$) can be tolerated --- for example, if the
  optimal objective function value is expected to be proportional to
  $H$ anyways.}

We note that \Thmm{t:greedy} gives a quantifiable sense in
which natural greedy algorithms are indeed ``simple
algorithms.''  Not all classes of algorithms have such a small
pseudo-dimension; see also the next section for further
discussion.\footnote{When the performance measure $\cost$ is solution
  quality, as in this section, one cannot identify
``simplicity'' with ``low pseudo-dimension'' without caveats:
strictly speaking, the set $\A$ containing only
  the optimal algorithm for the problem has pseudo-dimension~1.
When the problem $\Pi$ is $NP$-hard and $\A$ consists only of
polynomial-time algorithms (and assuming $P \neq NP$), the
pseudo-dimension
is a potentially relevant complexity measure for the
heuristics in $\A$.}

\begin{remark}[Non-Lipschitzness]\label{rem:lip}
We noted in \Secc{ss:pseudo} that the pseudo-dimension of a
finite set $\A$ is always at most $\log_2 |\A|$.  This suggests a
simple discretization approach to learning the best algorithm
from~$\A$: take a finite ``$\eps$-net'' of $\A$ and learn the best
algorithm in the finite net.  (Indeed, \Secc{ss:gd2} uses
precisely this approach.)
The issue is that without some kind of Lipschitz condition --- stating
that ``nearby'' algorithms in $\A$ have approximately the same
performance on all instances --- there's no reason to believe that the
best algorithm in the net is almost as good as the best algorithm from
all of $\A$.  Two different greedy heuristics --- two MWIS greedy
algorithms with arbitrarily close $\rho$-values, say --- can have
completely different executions on an instance.  This lack of
a Lipschitz property explains why we take care in
\Thmm{t:greedy} to bound the pseudo-dimension of the full infinite set
of greedy heuristics.\footnote{The $\eps$-net approach has the
  potential to work for greedy algorithms that choose the next object
  using a softmax-type rule, rather than deterministically as the
  unassigned object with the highest score.}
\end{remark}

\subsubsection{Computational Considerations}\label{sss:greedy_erm}

The proof of \Thmm{t:greedy} also demonstrates the presence of an efficient ERM algorithm:
the $O((sn\beta)^2)$ relevant comparisons are easy to
identify, the corresponding subintervals induced by each are easy to
compute (under mild assumptions on the scoring rule), and brute-force
search can be used to pick the best of the resulting
$O((sn\beta)^2\kappa)$ algorithms (an arbitrary one from each
subinterval).  This algorithm runs in polynomial time as long as
$\beta$ and $\kappa$ are polynomial in~$n$, and every algorithm of
$\A$ runs in polynomial time.

For example, for the family of Knapsack scoring rules described above,
implementing this ERM algorithm reduces to comparing the outputs of
$O(n^2m)$ different greedy heuristics (on each of the $m$ sampled
inputs), with $m=O(\log n)$.
For the adaptive MWIS heuristics, where $\beta =n$, it is
enough to compare the sample performance of $O(n^4m)$ different greedy
algorithms, with $m = O(\log n)$.

\subsubsection{Extensions: Multiple Algorithms, Multiple Parameters,
  and Local Search}\label{sss:greedy}

\Thmm{t:greedy} is robust and its proof is easily modified to
accommodate various extensions.  For a first example, consider
algorithms than run $q$ different members of a single-parameter greedy
heuristic family and return the best of the $q$ feasible solutions
obtained.\footnote{For example, the classical
  $\tfrac{1}{2}$-approximation for Knapsack has this form (with $q =
  2$).}
Extending the proof of \Thmm{t:greedy} yields a
pseudo-dimension bound of $O(q \log (\kappa\beta n))$ for the class of
all such algorithms.

For a second example,
consider families of greedy heuristics parameterized by $d$ real-valued
parameters $\rho_1,\ldots,\rho_d$.
Here, an analog of \Thmm{t:greedy}
holds with the crossing number $\kappa$ replaced by a more complicated
parameter --- essentially, the number of connected components of the
co-zero set of the difference of two scoring functions (with $\xi,\xi'$
fixed and variables $\rho_1,\ldots,\rho_d$).  This number can often be
bounded (by a function exponential in $d$) in natural cases, for
example using B\'ezout's theorem (see e.g.~\cite{ag}).

For a final extension, we sketch how to adapt the definitions and
results of this section from greedy to local search heuristics.
The input is again an object assignment problem (see
\Secc{sss:def}), along with an initial feasible solution (i.e.,
an assignment of objects to $R$, subject to feasibility constraints).
By a {\em $k$-swap local search heuristic}, we mean algorithms of the
following form:
\begin{enumerate}

\item Start with arbitrary feasible solution.

\item While the current solution is not locally optimal:

\begin{enumerate}

\item Use a {\em scoring rule} $\sigma$ to compute a score
  $\sigma(\{\xi_i : i \in K\})$ for each set of objects $K$ of size
  $k$, where $\xi_i$ is the current attribute of object
  $i$.

\item For the set $K$ with the highest score, use an
  {\em assignment rule} to re-assign each $i \in K$ to a value from $R$.
  If necessary, update the attributes of the appropriate objects.
(Again, assume that ties are resolved lexicographically.)

\end{enumerate}

\end{enumerate}
We assume that the assignment rule maintains feasibility,
so that we have a feasible assignment at the end of each execution of
the loop.  We also assume that the scoring and assignment rules
ensure that the algorithm terminates, e.g.\ via the existence of
a global objective function that decreases at every iteration (or by
incorporating timeouts).

A canonical example of a $k$-swap local search heuristic is the
$k$-OPT heuristic for the traveling salesman problem (TSP)%
\footnote{Given a complete undirected
  graph with a cost $c_{uv}$ for each edge $(u,v)$,
compute a tour (visiting each vertex  exactly once) that minimizes the
sum of the edge costs.} (see e.g.~\cite{JM97}).
We can view TSP as an object assignment problem, where the objects are
edges and $R = \{0,1\}$; the feasibility constraint is that the edges
assigned to~1 should form a tour.
Recall that a local move in $k$-OPT consists of swapping
out $k$ edges from the current tour and swapping in $k$ edges to
obtain a new tour.  (So in our terminology, $k$-OPT is a $2k$-swap
local search heuristic.)
Another well-known
example is the local search algorithms for the $p$-median
problem studied in Arya et al.~\cite{A+01}, which are parameterized by
the number
of medians that can be removed and added in each local move.
Analogous local search algorithms make sense for the MWIS problem as
well.

Scoring and assignment rules are now defined on subsets of $k$
objects, rather than individual objects.
A single-parameter family of scoring rules is now called
{\em $\kappa$-crossing} if, for every subset $K$ of at most $k$
objects and each distinct pair of attribute sets
$\xi_K$ and $\xi'_K$, there are at most $\kappa$ values of $\rho$ for
which $\sigma(\rho,\xi_K) = \sigma(\rho,\xi'_K)$.
An assignment rule is now {\em $\beta$-bounded} if for every
subset $K$ of at most $k$ objects,
ranging over all possible trajectories of the local search
heuristic,
the attribute set of $K$ takes on at most $\beta$ distinct
values.
For example, in MWIS, suppose we allow two vertices $u,v$ to be
removed and two vertices $y,z$ to be added in a
single local move, and we use
the single-parameter scoring rule family
$$
\sigma_{\rho}(u,v,y,z) = \frac{w_u}{(1+\deg(u))^{\rho}} +
\frac{w_v}{(1+\deg(v))^{\rho}} - \frac{w_y}{(1+\deg(y))^{\rho}}
- \frac{w_z}{(1+\deg(z))^{\rho}}.
$$
Here $\deg(v)$ could refer to the degree of vertex $v$ in original
graph, to the number of neighbors of~$v$ that do not have any
neighbors other than~$v$ in the current independent set, etc.
In any case, since a generalized Dirichlet polynomial with $t$ terms
has at most $t-1$ zeroes (see e.g.~\cite[Corollary 3.2]{J06}), this is
a 3-crossing family.
The natural assignment rule is $n^4$-bounded.\footnote{In general,
arbitrary local search algorithms can be made $\beta$-bounded through
time-outs: if such an algorithm always halts within
  $T$ iterations, then the corresponding assignment rule is $T$-bounded.}

By replacing the number~$n$ of objects by the number~$O(n^k)$ of
subsets of at most $k$ objects in the proof of \Thmm{t:greedy},
we obtain the following.

\begin{theorem}[Pseudo-Dimension of Local Search Algorithms]\label{t:ls}
If $\A$ denotes a $\kb$-single-parameter family of $k$-swap
local search heuristics for an object assignment problem with $n$
objects, then the
pseudo-dimension of $\A$ is $O(k \log (\kappa\beta n))$.
\end{theorem}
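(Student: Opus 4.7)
The plan is to mimic the proof of \Thm{greedy} essentially verbatim, replacing the role of single objects by $k$-subsets of objects. Concretely, I will fix a candidate shattered set $S = x_1,\dots,x_s$ of $s$ instances together with witnesses $r_1,\dots,r_s$, and count the number of distinct binary labelings $\{\mathbf{1}[\cost(A,x_i)>r_i]\}_{i=1}^s$ that can be induced as $A$ ranges over $\A$, i.e., as $\rho$ ranges over the parameter interval $I$.

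First I would argue that each instance contributes only $O(n^k)$ subsets of size at most $k$, and since the assignment rule is $\beta$-bounded, each such subset $K$ can take on at most $\beta$ distinct attribute tuples $\xi_K$ along any trajectory. Hence across all $s$ instances in $S$, the total number of (subset, attribute-tuple) pairs that could ever arise in any execution of any algorithm of $\A$ is at most $s \cdot O(n^k) \cdot \beta$. The relevant comparisons made by algorithms in $\A$ are therefore restricted to at most $N := \bigl(s \cdot O(n^k) \cdot \beta\bigr)^2$ ordered pairs of (subset, attribute-tuple) pairs.

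Next, I would invoke the $\kappa$-crossing property: for each relevant comparison between $(K,\xi_K)$ and $(K',\xi'_{K'})$, the set of $\rho \in I$ at which the two scores are equal has size at most $\kappa$, and since $\sigma(\rho,\cdot)$ is continuous in $\rho$, the sign of the difference is constant on each of the (at most $\kappa+1$) open subintervals between consecutive tie points. Lexicographic tie-breaking handles the degenerate case $\xi_K = \xi'_{K'}$ as in the greedy proof. Intersecting these at most $N$ partitions of $I$ yields a refinement of $I$ into at most $N\kappa + 1$ subintervals on each of which every relevant comparison outcome is constant. Therefore all parameter values in a common subinterval induce identical executions on every instance in $S$, and hence identical labelings. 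So the number of labelings induced by $\A$ on $S$ is at most
\[
\bigl(s \cdot O(n^k) \cdot \beta\bigr)^2 \kappa + 1.
\]
For $S$ to be shattered we need this to be at least $2^s$; taking logarithms and solving gives $s = O(k \log(\kappa\beta n))$, which is the desired pseudo-dimension bound.

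I do not anticipate any serious obstacle: the only nontrivial change from \Thm{greedy} is the $n \mapsto O(n^k)$ substitution in the object count, and the extra factor of $k$ in the final bound appears naturally when taking the logarithm. The one point requiring minor care is to verify that the $\kappa$-crossing and $\beta$-bounded definitions, which are now stated for $k$-subsets $K$ and their attribute tuples $\xi_K$, plug into the counting argument exactly as before; this is immediate from the definitions given just prior to the theorem.
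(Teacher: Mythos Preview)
Your proposal is correct and is exactly the approach the paper takes: the paper's entire proof is the one-line remark that one replaces the number~$n$ of objects by the number~$O(n^k)$ of subsets of at most~$k$ objects in the proof of \Thmm{t:greedy}, and you have faithfully carried out that substitution.
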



\subsection{Application: Self-Improving Algorithms Revisited}\label{ss:sorting2}

We next give a new interpretation of the self-improving sorting
algorithm of Ailon et al.\cite{sesh}.
Namely, we show that the main result in \cite{sesh} effectively
identifies a set of sorting algorithms that simultaneously has low
representation error (for independently distributed array elements)
and small pseudo-dimension (and hence low generalization error).
Other constructions of self-improving
algorithms~\cite{sesh,CS08,CMS10,CMS12} can be likewise reinterpreted.
In contrast to \Secc{ss:greedy2}, here our performance measure
$\cost$ is related to the running time of an algorithm $A$ on an input $x$, which
we want to minimize, rather than the objective function value of the output, which we wanted to maximize.

Consider the problem of sorting $n$ real numbers in the comparison model.
By a {\em bucket-based sorting algorithm}, we mean an algorithm $A$ for
which there are ``bucket boundaries'' $b_1 < b_2 < \cdots < b_{\ell}$
such that $A$ first distributes the $n$ input elements into their
rightful buckets, and then sorts each bucket separately, concatenating
the results.
The degrees of freedom when defining such an algorithm
are: (i) the choice of the bucket boundaries; (ii) the method used to
distribute input elements to the buckets; and (iii) the method used
to sort each bucket. The performance measure $\cost$ is the number of
comparisons used by the algorithm.\footnote{Devroye~\cite{devroye}
  studies similar families of sorting algorithms, with the goal of
  characterizing the expected running time as a function of the input
  distribution.}

The key steps in the analysis in \cite{sesh} can be reinterpreted as
proving that this set of bucket-based sorting algorithms has low
representation error, in the following sense.
\begin{theorem}[{{\cite[Theorem 2.1]{sesh}}}]\label{t:sesh}
Suppose that each array element $a_i$ is drawn independently from a
distribution $\dist_i$.  Then there exists a bucket-based sorting
algorithm with expected running time at most a constant factor times
that of the optimal sorting algorithm for $\dist_1 \times \cdots
\times \dist_n$.
\end{theorem}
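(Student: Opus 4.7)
The plan is to match the information-theoretic lower bound. For any sorting algorithm in the comparison model, the expected number of comparisons needed to sort $n$ independent draws from $\dist_1 \times \cdots \times \dist_n$ is bounded below by $H(\pi) + \Omega(n)$, where $\pi$ is the random output permutation and $H(\cdot)$ denotes Shannon entropy: each comparison yields at most one bit of information about $\pi$, and any algorithm must touch each position at least once. It therefore suffices to exhibit a bucket-based sorting algorithm whose expected number of comparisons is $O(H(\pi) + n)$.

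The construction proceeds in three stages. First, choose the bucket boundaries by drawing a polynomial number of training arrays from $\dist_1 \times \cdots \times \dist_n$ and letting $b_1 < \cdots < b_{n-1}$ approximate every $n$-th order statistic of the pooled sample. By standard concentration arguments, with high probability every bucket $B_j = (b_{j-1}, b_j]$ has, for each coordinate $i$, an accurate empirical estimate $\hat p_{ij}$ of the true probability $p_{ij} = \mathbf{Pr}[a_i \in B_j]$, and moreover the $n$ buckets each receive $O(1)$ input elements in expectation when a fresh array is drawn. Second, for each index $i$ independently, build a near-optimal binary search tree $T_i$ on $\{b_1, \dots, b_{n-1}\}$ whose leaf for bucket $j$ sits at depth $O(\log(1/\hat p_{ij}) + 1)$; Shannon--Fano or Huffman coding achieves this. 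To distribute the input, route each $a_i$ through $T_i$, so that the expected number of comparisons to place $a_i$ is $O(H(B(a_i)) + 1)$, where $B(a_i)$ is the random bucket index. Third, sort each nonempty bucket with mergesort, paying $O(|B_j|\log|B_j|)$ comparisons on bucket $j$, and concatenate.

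Summing the three contributions, the expected comparison count of the bucket-based algorithm is
\[
O\!\left(n + \sum_{i=1}^n H(B(a_i)) + \mathbb{E}\!\left[\sum_{j} |B_j| \log |B_j|\right]\right).
\]
To match the lower bound, invoke the chain rule for entropy and independence of the $a_i$: since $a_1, \ldots, a_n$ are independent, $H(\pi) = H(B(a_1), \ldots, B(a_n)) + H(\pi \mid B(a_1),\ldots,B(a_n)) = \sum_i H(B(a_i)) + H(\pi \mid B)$, and the conditional entropy $H(\pi \mid B)$ in turn dominates $\mathbb{E}[\sum_j |B_j| \log |B_j|] - O(n)$ because, given the bucket assignment, the residual uncertainty about $\pi$ is essentially the uncertainty of the within-bucket orderings (up to an $O(n)$ slack absorbing the deviation between Shannon's bound and integer code lengths, and the $O(1)$ per-bucket slack from the bucket-size concentration). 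Combining, the algorithm's expected cost is $O(H(\pi) + n)$, within a constant factor of optimal.

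The main obstacle is the third paragraph: carefully accounting for the $O(n)$ slack terms so that the chain-rule comparison is tight, and verifying that the sampling-based bucket boundaries give both (i) accurate enough search-tree depths that the distribution phase costs $\sum_i H(B(a_i)) + O(n)$ rather than a logarithmic blow-up, and (ii) bucket-size concentration strong enough that the within-bucket sorting cost is controlled by $H(\pi \mid B) + O(n)$. Handling these two requirements with a single set of bucket boundaries --- chosen obliviously to the marginals $\dist_i$ --- is the heart of the argument, and is precisely where the construction of \cite{sesh} does its real work.
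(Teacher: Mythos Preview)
The paper does not give its own proof of this theorem; it is quoted directly from \cite{sesh}, and the surrounding text only records a few specifics of the construction there (exactly $n$ buckets, InsertionSort within buckets, and per-coordinate search trees of size $O(n^c)$). So there is no ``paper's proof'' to match; your sketch is being judged as a summary of the argument in \cite{sesh}.

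Your overall architecture is correct: an information-theoretic lower bound of $H(\pi)+\Omega(n)$, bucket boundaries chosen so the mixture $\tfrac{1}{n}\sum_i \dist_i$ is roughly uniform across buckets, near-optimal search trees $T_i$ so that the distribution phase costs $\sum_i H(B(a_i))+O(n)$, and then cheap within-bucket sorting. That is exactly how \cite{sesh} proceeds.

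There is, however, a genuine slip in the accounting. The identity you write, $H(\pi)=H(B)+H(\pi\mid B)$, is false: the chain rule gives $H(B)+H(\pi\mid B)=H(\pi,B)=H(\pi)+H(B\mid\pi)$, and $B$ is \emph{not} determined by $\pi$. What actually makes the argument close is the separate observation that $H(B\mid\pi)=O(n)$: given the output permutation, the bucket assignment is determined by the vector of bucket occupancies $(|B_1|,\ldots,|B_n|)$, and there are at most $\binom{2n-1}{n-1}\le 4^n$ such vectors. This yields $\sum_i H(B(a_i))=H(B)\le H(\pi)+O(n)$, which is the inequality you need. Relatedly, your attempt to bound the within-bucket cost via $H(\pi\mid B)$ runs in the wrong direction: the easy inequality is $H(\pi\mid B)\le \mathbb{E}\bigl[\sum_j\log|B_j|!\bigr]$, not $\ge$. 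The actual proof avoids this entirely by using the bucket-size concentration you already noted---each bucket is a sum of independent indicators with total mean $O(1)$, so $\mathbb{E}[|B_j|^2]=O(1)$ and InsertionSort gives $O(n)$ total within-bucket work with no entropy comparison needed.
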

The proof in~\cite{sesh} establishes \Thmm{t:sesh} even
when the number $\ell$ of buckets is only $n$,
each bucket is sorted using InsertionSort, and each element
$a_i$ is distributed independently to its rightful bucket using a
search tree stored in $O(n^{c})$ bits, where $c > 0$ is an arbitrary
constant (and the running time depends on
$\tfrac{1}{c}$).\footnote{For small $c$, each search tree $T_i$ is so
  small that some searches will go unresolved; such unsuccessful
  searches are handled by a standard binary search over the buckets.}
Let $\A_{c}$ denote the set of all such bucket-based sorting algorithms.

\Thmm{t:sesh} reduces the task of learning a near-optimal
sorting algorithm to the problem of $\ed$-learning the optimal
algorithm from $\A_c$.
\Cor{ub} reduces this learning
problem to bounding the pseudo-dimension of $\A_c$.
We next prove such a bound, which effectively says that bucket-based
sorting algorithms are ``relatively simple''
algorithms.\footnote{Not all sorting algorithms are simple in the
  sense of having polynomial pseudo-dimension.
For example, the space lower bound in \cite[Lemma
    2.1]{sesh} can be adapted to show that no class of sorting
algorithms with polynomial pseudo-dimension (or fat shattering
dimension) has low representation error in the sense of
\Thmm{t:sesh} for general distributions over sorting instances,
where the array entries need not be independent.}

\begin{theorem}[Pseudo-Dimension of Bucket-Based Sorting
    Algorithms]\label{t:bucket}
The pseudo-dim\-ension of $\A_c$ is $O(n^{1+c})$.
\end{theorem}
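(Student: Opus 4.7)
My plan is to bound the number of distinct cost vectors $(\cost(A, x_1), \ldots, \cost(A, x_s))$ that algorithms in $\A_c$ can induce on any fixed set $S = \{x_1, \ldots, x_s\}$ of sorting instances; pseudo-shattering $S$ requires this count to be at least $2^s$, so an upper bound on the count becomes an upper bound on the pseudo-dimension, exactly as in the proof of \Thm{greedy}.

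First I would split the parameterization of $\A_c$ into a discrete piece and a continuous piece. Each algorithm is specified by (i) $n$ search trees $T_1, \ldots, T_n$, each an $O(n^c)$-bit object, and (ii) the $n$ real-valued bucket boundaries $b_1 < \cdots < b_n$. There are at most $2^{n \cdot O(n^c)} = 2^{O(n^{1+c})}$ possible tuples $(T_1, \ldots, T_n)$, so I would fix one such tuple and ask how the cost vector on $S$ depends on the continuous parameters.

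The main step is to argue that, for fixed search trees, the cost on each $x_j$ is a piecewise-constant function of $(b_1, \ldots, b_n)$ whose breakpoints lie on the axis-parallel hyperplanes $\{b_k = a\}$ for $k \in [n]$ and $a$ an element appearing in some $x_j$. The comparison count decomposes into (a) comparisons inside the search-tree lookups, which depend only on the element and the fixed tree; (b) comparisons in the fallback binary search over the boundaries, which depend only on which interval $[b_k, b_{k+1})$ an unresolved element lies in; and (c) comparisons used by InsertionSort on each bucket, which depend only on the discrete assignment of elements to buckets together with their original order inside $x_j$. All three quantities only change when some boundary $b_k$ crosses an element $a$, which is precisely the claimed breakpoint set. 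Within each cell of the arrangement $\{b_k = a : k \in [n],\ a \in S\}$ intersected with $\{b_1 < \cdots < b_n\}$, therefore, the entire cost vector on $S$ is constant. Since the arrangement is axis-parallel with at most $sn$ hyperplanes orthogonal to each of the $n$ coordinate directions, it has at most $(sn+1)^n$ cells. Multiplying by the $2^{O(n^{1+c})}$ choices of search trees gives at most $(sn+1)^n \cdot 2^{O(n^{1+c})}$ distinct cost vectors on $S$; setting this $\geq 2^s$ and taking logs gives $s \leq n \log_2(sn+1) + O(n^{1+c})$, which forces $s = O(n^{1+c})$ since $n \log n = o(n^{1+c})$ for $c > 0$.

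The main obstacle I anticipate is rigorously justifying the piecewise-constant claim, especially for the binary-search fallback: I need to confirm that the sequence of comparisons executed along a binary-search path is determined by which interval $[b_k, b_{k+1})$ the queried element falls into and not by the precise numerical values of the $b_k$'s, so that moving a boundary without crossing any element leaves the comparison count invariant. The InsertionSort contribution requires an analogous (but easier) check that the relative order of the elements assigned to a bucket, and hence the comparison count, depends only on the combinatorial assignment and on the fixed input.
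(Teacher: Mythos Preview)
Your proposal is correct and follows essentially the same approach as the paper: both arguments factor the description of an algorithm into the continuous bucket boundaries and the discrete $O(n^{1+c})$-bit search-tree data, bound the number of combinatorially distinct boundary placements relative to the $sn$ input elements by roughly $(sn)^n$, and multiply by $2^{O(n^{1+c})}$ to conclude $s = O(n^{1+c})$. The paper simply asserts that ``equivalent'' algorithms with structurally identical trees have identical cost on $S$, whereas you unpack this into the three components (tree lookups, fallback binary search, InsertionSort) --- your anticipated obstacle about the binary-search path is a non-issue, since that path is determined by the outcomes of the comparisons $x_{ij} < b_k$, which are fixed within a cell of the arrangement.
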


\begin{proof}
Recall from the definitions (\Secc{ss:pseudo}) that we need
to upper bound the size of every set that is shatterable using
the bucket-based sorting algorithms in $\A_c$.
For us, a set is a fixed set of $s$ inputs (i.e., arrays of length $n$),
$S = x_1,\ldots,x_s$.
For a potential witness $r_1,\ldots,r_s \in \RR$,
every algorithm $A \in \A_c$ induces a binary labeling of each
sample $x_i$, according to whether $\cost(A,x_i)$ is strictly more
than or at most $r_i$.
We proceed to bound from above the number of distinct binary
labellings of $S$ induced by the algorithms of $\A_c$, for any
potential witness.


By definition, an algorithm from $\A_c$ is fully specified by: (i) a
choice of $n$ bucket boundaries $b_1 < \cdots < b_n$; and (ii) for
each $i=1,2,\ldots,n$, a choice of a search tree $T_i$ of size at most
$O(n^c)$ for placing $x_i$ in the correct bucket.
Call two algorithms $A,A' \in \A_c$ {\em equivalent} if their sets of
bucket boundaries $b_1,\ldots,b_n$ and $b'_1,\ldots,b'_n$ induce the
same partition of the $sn$ array elements of the inputs in $S$ ---
that is, if $x_{ij} < b_k$ if and only if $x_{ij} < b'_k$ (for all
$i,j,k$).  The number of equivalence classes of this equivalence
relation is at most $\binom{sn + n}{n} \le (sn+n)^n$.
Within an equivalence class, two algorithms that use structurally
identical search trees will have identical performance on all $s$ of
the samples.  Since the search trees of every algorithm of $\A_c$ are
described by at most $O(n^{1+c})$ bits, ranging over the algorithms of a
single equivalence class generates at most $2^{O(n^{1+c})}$ distinct
binary labellings of the $s$ sample inputs.  Ranging over all
algorithms thus generates at most $(sn+n)^n2^{O(n^{1+c})}$ labellings.
This exceeds $2^s$, the requisite number of labellings to shatter $S$,
only if $s = O(n^{1+c})$.
\end{proof}

\Thmm{t:bucket} and \Cor{ub} imply that
$m = \tilde{O}(\tfrac{H^2}{\eps^2}n^{1+c})$ samples are enough to
$\ed$-learn the optimal algorithm in~$\A_c$, where $H$ can be taken as
the ratio between the maximum and minimum running time of any
algorithm in $\A_c$ on any instance.\footnote{We again use
  $\tilde{O}(\cdot)$ to
  suppress logarithmic factors.}  Since the minimum running time is
$\Omega(n)$ and we can assume that the maximum running time is $O(n
\log n)$ --- if an algorithm exceeds this bound, we can abort it and
safely run MergeSort instead --- we obtain a sample complexity
bound of $\tilde{O}(n^{1+c})$.\footnote{In the notation of
  \Thmm{t:pseudo}, we are taking $H = \Theta(n \log n)$, $\eps =
  \Theta(n)$, and using the fact that all quantities are $\Omega(n)$
  to conclude that all running times are correctly
estimated up to a constant
  factor.  The results implicit in~\cite{sesh} are likewise for
  relative error.}

\begin{remark}[Comparison to~\cite{sesh}]
The sample complexity bound implicit in~\cite{sesh} for learning a
near-optimal sorting algorithm is $\tilde{O}(n^c)$, a linear factor
better than the $\tilde{O}(n^{1+c})$ bound implied by
\Thmm{t:bucket}.  There is good reason for this: the
pseudo-dimension bound of \Thmm{t:bucket} implies that an even
harder problem has sample complexity $\tilde{O}(n^{1+c})$, namely that
of learning a near-optimal bucket-based sorting algorithm with respect
to an {\em arbitrary} distribution over inputs, {\em even with
correlated array elements}.\footnote{When
  array elements are not independent, however, \Thmm{t:sesh}
  fails and the best bucket-based sorting algorithm might be
  more than a constant-factor worse than the optimal sorting algorithm.}
The bound of $\tilde{O}(n^c)$
in~\cite{sesh} applies only to the problem of learning a near-optimal
bucket-based sorting algorithm for an unknown input distribution with
independent array entries --- the
savings comes from the fact that all $n$
near-optimal search trees $T_1,\ldots,T_n$ can be learned
in parallel.
\end{remark}


\subsection{Application: Feature-Based Algorithm Selection}\label{ss:features}







Previous sections studied the problem of selecting a single algorithm
for use in an application domain ---  of using training data to
make an informed commitment to a single algorithm from a class
$\A$, which is then used on all future instances.
A more refined and ambitious approach is to select an algorithm
based both on previous experience {\em and on the current instance
  to be solved}.
This approach assumes,
as in the scenario in \Secc{ss:sat},
that it is feasible to quickly compute some
features of an instance and then to select an algorithm as a function
of these features.

Throughout this section, we augment the basic model of
\Secc{ss:basic} with:
\begin{enumerate}

\item [5.] A set $\F$ of possible instance feature values, and a map $f:X
  \rightarrow \F$ that computes the features of a given
  instance.\footnote{Defining a good feature set is a notoriously
    challenging and important problem, but it is beyond the scope of our
    model --- we take the set $\F$ and map~$f$ as given.}

\end{enumerate}
For instance, if $X$ is the set of SAT instances, then $f(x)$ might
encode the clause/variable ratio of the instance~$x$, Knuth's estimate of
the search tree size~\cite{knuth}, and so on.

When the set $\F$ of possible instance feature values is finite, the
guarantees for the basic model extend easily with a
linear (in $|\F|$) degradation in the pseudo-dimension.\footnote{For
  example,~\cite{xu} first predicts whether or not a given SAT
  instance
  is satisfiable or not, and then uses a ``conditional'' empirical
  performance model to choose a SAT solver.  This can be viewed as an
  example with $|\F| = 2$, corresponding to the feature values ``looks
  satisfiable'' and
  ``looks unsatisfiable.''}
To explain, we add an additional ingredient to the model.
\begin{enumerate}

\item [6.] A set $\G$ of {\em algorithm selection maps}, with each $g
  \in \G$ a function from $\F$ to $\A$.

\end{enumerate}
An algorithm selection map recommends an algorithm as a function of
the features of an instance.

We can view an algorithm selection map $g$ as a real-valued function
defined on the instance space $X$, with $g(x)$ defined as
$\cost(g(f(x)),x)$.  That is, $g(x)$ is the running time on $x$ of the
algorithm $g(f(x))$ advocated by $g$, given that $x$ has features
$f(x)$.
The basic model studied earlier is the
special case where $\G$ is the set of constant functions,
which are in correspondence with the algorithms of $\A$.

\Cor{ub} reduces bounding the sample complexity of
$\ed$-learning the best algorithm selection map of $\G$ to bounding
the pseudo-dimension of the set of real-valued functions induced by
$\G$.  When $\G$ is finite, there is a trivial upper bound of $\log_2
|\G|$.
The pseudo-dimension is also small whenever $\F$ is small and the set $\A$
of algorithms has small pseudo-dimension.\footnote{When $\G$ is the
  set of all maps from $\F$ to $\A$ and every feature value of~$\F$
  appears with approximately the same probability, one can
  alternatively
  just separately learn the best algorithm for each feature value.}

\begin{proposition}[Pseudo-Dimension of Algorithm Selection Maps]\label{t:smallb}
If $\G$ is a set of algorithm selection maps from a finite set $\F$ to
a set $\A$ of algorithms with pseudo-dimension $d$, then $\G$ has
pseudo-dimension at most $|\F|d$.
\end{proposition}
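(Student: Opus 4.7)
The plan is to show that any set shattered by $\G$ can be partitioned according to feature value into at most $|\F|$ pieces, each of which is shattered by $\A$ itself, and then to invoke the pseudo-dimension bound on $\A$ piece by piece.

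Concretely, suppose $S = \{x_1, \ldots, x_s\}$ is shattered by $\G$ with real-valued witnesses $r_1, \ldots, r_s$. For each $\phi \in \F$, I would collect the indices $I_\phi = \{i : f(x_i) = \phi\}$ and the corresponding sub-sample $S_\phi = \{x_i : i \in I_\phi\}$ with the inherited witnesses $\{r_i : i \in I_\phi\}$. The claim I then need is that $\A$, viewed as a set of real-valued functions $A \mapsto \cost(A, \cdot)$ on $X$, shatters $S_\phi$ with these witnesses. For any target subset $T \subseteq I_\phi$, I extend $T$ arbitrarily to some $T' \subseteq \{1, \ldots, s\}$ and invoke the shattering of $S$ by $\G$ to pick $g \in \G$ realizing $T'$. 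Since every $x_i \in S_\phi$ has $g(x_i) = \cost(g(f(x_i)), x_i) = \cost(g(\phi), x_i)$, the single algorithm $A := g(\phi) \in \A$ realizes the target labeling $T$ on $S_\phi$.

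Because $\A$ has pseudo-dimension $d$, it follows that $|S_\phi| \le d$ for each $\phi \in \F$ (trivially when $S_\phi = \emptyset$). Summing over feature values yields
$$s \;=\; \sum_{\phi \in \F} |S_\phi| \;\le\; |\F| \cdot d,$$
so the pseudo-dimension of $\G$ is at most $|\F| d$, as claimed.

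There is no real obstacle here; the only subtle point is the extension step, and that is valid because shattering of $S$ asserts that \emph{every} one of the $2^s$ labelings of $S$ is realizable by some $g \in \G$, so in particular some realization exists for each extension $T'$ of $T \subseteq I_\phi$.
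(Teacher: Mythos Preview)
Your proof is correct and is precisely the argument the paper has in mind; the paper's one-line proof (``a set of inputs of size $|\F|d+1$ is shattered only if there is a shattered set of inputs with identical features of size $d+1$'') is just the contrapositive pigeonhole version of your partitioning-by-feature-value argument. Your explicit verification that each $S_\phi$ is shattered by $\A$ via the extension step is exactly what makes that one-liner work.
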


\begin{proof}
A set of inputs of size $|\F|d+1$ is shattered only if there is a
shattered set of inputs with identical features of size $d+1$.
\end{proof}




Now suppose $\F$ is very large (or infinite).
We focus on the case where
$\A$ is small enough that it is feasible to learn a
separate performance prediction model for each algorithm $A \in
\A$ (though see \Rem{largeA}).
This is exactly the approach taken
in the motivating example of empirical performance models (EPMs) for
SAT described in \Secc{ss:sat}.
In this case, we augment the basic model to include a family of
performance predictors.
\begin{enumerate}

\item [6.] A set $\cP$ of {\em performance predictors}, with each $p
  \in \cP$ a function from $\F$ to $\RR$.

\end{enumerate}
Performance predictors play the same role as the EPMs used in~\cite{xu}.

The goal is to learn, for each algorithm $A \in \A$, among
all permitted predictors $p \in \cP$, the one that minimizes some loss
function.  Like the performance measure $\cost$, we take this loss
function as given.  The most commonly used loss function is squared
error; in this case, for each $A \in \A$ we aim to compute
the function that minimizes
$$
\expect[x \sim \dist]{(\cost(A,x)-p(f(x)))^2}
$$
over $p \in \cP$.\footnote{Note that the expected loss incurred by
  the best   predictor depends   on the choices of the predictor set $\cP$,
  the feature set
  $\F$, and map $f$.  Again,  these choices are outside our model.}
For a fixed algorithm $A$,
this is a standard regression problem, with domain $\F$,
real-valued labels, and a distribution on $\F \times \RR$ induced by
$\dist$ via $x \mapsto (f(x),\cost(A,x))$.  Bounding the sample
complexity of this learning problem
reduces
to bounding the pseudo-dimension of $\cP$.
For standard choices of $\cP$, such bounds are well known.
%
For example, suppose the set $\cP$ is the class of {\em linear predictors},
with each $p \in \cP$ having the form $p(f(x))
= a^Tf(x)$ for some coefficient vector $a \in \RR^d$.\footnote{A
  linear model might sound unreasonably simple for the
  task of predicting the running time of an algorithm, but
  significant complexity can be included in
the feature map $f(x)$.
  For example, each coordinate of $f(x)$ could be a nonlinear
  combination of several ``basic features'' of $x$.  Indeed, linear models
  often exhibit surprisingly good empirical performance, given a
  judicious choice of a feature set~\cite{LNS09}.}
%
\begin{proposition}[Pseudo-Dimension of Linear Predictors]\label{prop:linear}
If $\F$ contains real-valued $d$-dim\-ensional features and $\cP$ is the
set of linear predictors, then the pseudo-dimension of $\cP$ is at
most~$d$.
\end{proposition}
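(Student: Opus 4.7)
The plan is to show that no set of $d+1$ instances can be pseudo-shattered by $\cP$. Suppose for contradiction that $x_1,\ldots,x_{d+1} \in X$ with witnesses $r_1,\ldots,r_{d+1} \in \RR$ form a shattered set; writing $y_i = f(x_i) \in \RR^d$, this means that for every $T \sse [d+1]$ there exists $a_T \in \RR^d$ with $a_T^\top y_i > r_i$ iff $i \in T$.

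First I would translate the shattering condition into a geometric statement. Consider the affine map $\phi : \RR^d \to \RR^{d+1}$ given by $\phi(a) = (a^\top y_1 - r_1,\, \ldots,\, a^\top y_{d+1} - r_{d+1})$. Its image $V$ is an affine subspace of $\RR^{d+1}$ of dimension at most $d$. The shattering assumption is precisely that $V$ meets every one of the $2^{d+1}$ open orthants of $\RR^{d+1}$ (where an open orthant is determined by a strict sign in each coordinate).

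Next I would apply a standard hyperplane-arrangement count to bound the orthants $V$ can meet. The $d+1$ coordinate hyperplanes $\{z_i = 0\}$ restrict to affine hyperplanes inside $V$ (if some coordinate hyperplane contains $V$, the corresponding coordinate is identically zero on $V$ and no open orthant is met, contradicting shattering immediately). An arrangement of $m = d+1$ hyperplanes in a $d$-dimensional affine space has at most $\sum_{k=0}^{d}\binom{d+1}{k} = 2^{d+1}-1$ cells (Zaslavsky's formula / the classical hyperplane arrangement bound). Each cell lies in a single open orthant of $\RR^{d+1}$, and every orthant meeting $V$ contains at least one cell. Hence $V$ meets at most $2^{d+1} - 1$ orthants, contradicting that it meets all $2^{d+1}$ of them. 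This forces the pseudo-dimension to be at most $d$.

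I don't anticipate any serious obstacle: the only mild subtlety is handling the degenerate cases where $\phi$ has image of dimension strictly less than $d$ or where a coordinate hyperplane contains $V$, but in both situations the orthant bound only gets stronger and the contradiction still goes through. The argument is essentially the pseudo-dimension analog of the VC-dimension bound for homogeneous halfspaces, phrased in terms of orthants cut out of a low-dimensional affine subspace.
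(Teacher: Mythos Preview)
Your argument is correct. The paper itself does not prove this proposition --- it explicitly remarks that such bounds ``are well known'' for standard predictor classes and simply states the result --- so there is no proof in the paper to compare against. Your orthant/hyperplane-arrangement argument is a clean direct way to establish the bound, essentially unpacking the standard fact that a $d$-dimensional vector space of real-valued functions has pseudo-dimension~$d$.

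One small wrinkle worth tightening: the pseudo-shattering definition in the paper requires $a_T^\top y_i > r_i$ for $i \in T$ and $a_T^\top y_i \le r_i$ for $i \notin T$, so strictly speaking the image $V$ is only guaranteed to meet each \emph{half-open} orthant (strict on the $T$-coordinates, non-strict on the complement), not each open orthant. This is handled by a one-line perturbation: with only finitely many $a_T$, set $\delta = \min_{T,\,i\in T}(a_T^\top y_i - r_i) > 0$ and replace each witness $r_i$ by $r_i + \delta/2$; all inequalities are now strict on both sides, and your open-orthant count via the arrangement bound $\sum_{k=0}^{d}\binom{d+1}{k}=2^{d+1}-1$ goes through verbatim. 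The degenerate cases you flag (image of dimension $<d$, or a coordinate hyperplane containing $V$) indeed only make the bound tighter.
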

If all functions in~$\cP$ map all possible $\varphi$ to $[0,H]$, then
\Propp{prop:linear} and \Cor{ub} imply a sample complexity bound
of~$\tilde{O}(\tfrac{H^4}{\eps^2}d)$ for $\ed$-learning the predictor
with minimum expected square error. 
Similar results hold, with worse dependence on~$d$, if $\cP$ is a set
of low-degree polynomials~\cite{AB}.

For another example, suppose $\cP_{\ell}$ is the set of
regression trees with at most $\ell$ nodes, where each internal node
performs an inequality test on a coordinate of the feature vector
$\varphi$ (and leaves are labeled with performance
estimates).\footnote{Regression trees, and random forests thereof,
  have emerged as a popular class of predictors in empirical work on
  application-specific algorithm selection~\cite{H+14}.}
This class also has low pseudo-dimension,
and hence the problem of learning a near-optimal
predictor has correspondingly small sample complexity. 

\begin{proposition}[Pseudo-Dimension of Regression Trees]
Suppose $\F$ contains real-valued $d$-dimensional features and let
$\cP_{\ell}$ be the set of regression trees with at most $\ell$
nodes, where each node performs an inequality test on one of the features.
Then, the pseudo-dimension of $\cP_{\ell}$ is
$O(\ell \log (\ell d))$.
\end{proposition}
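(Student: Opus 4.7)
The plan is a growth-function argument: I bound the number of distinct binary labelings that predictors in $\cP_\ell$ can induce on a putative shattered set $S = \{x_1,\ldots,x_m\} \sse \F$ with witnesses $r_1,\ldots,r_m \in \RR$, and then conclude $m = O(\ell \log(\ell d))$ from the requirement that this count be at least $2^m$ for shattering.

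I would decompose a regression tree $p \in \cP_\ell$ into three ingredients and count the ``behaviorally distinct'' choices of each on $S$. First, the rooted binary tree shape on $\le \ell$ nodes contributes at most $4^\ell$ possibilities via the standard Catalan-type bound. Second, at each of the $I$ internal nodes, the choice of a feature index $j \in \{1,\ldots,d\}$ and a threshold $t \in \RR$ contributes at most $d(m+1)$ behaviorally distinct options, since along each of the $d$ coordinates the $m$ coordinate values of $S$ partition $\RR$ into at most $m+1$ order-equivalent threshold intervals. Third, at each of the $L$ leaves $\lambda$, reached by $k_\lambda$ sample points with $\sum_\lambda k_\lambda \le m$, varying the leaf prediction $v_\lambda \in \RR$ produces at most $k_\lambda + 1$ sign patterns of $\{v_\lambda - r_i : x_i \to \lambda\}$, and hence at most $k_\lambda + 1$ distinct labelings of those points. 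Since $I + L \le \ell$, multiplying yields at most
\[
4^\ell \cdot (d(m+1))^{I} \cdot \prod_{\lambda}(k_\lambda + 1) \;\le\; 4^\ell \cdot d^\ell \cdot (m+1)^{\ell} \;=\; (4d(m+1))^\ell
\]
distinct labelings of $S$. Requiring this to be at least $2^m$ gives $m \le \ell \log_2(4d(m+1))$, and a standard manipulation then yields $m = O(\ell \log(\ell d))$, which is the desired pseudo-dimension bound.

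The one subtle accounting point is the $d(m+1)$ bound at each internal node: it must hold uniformly over all possible routings of $S$ through the rest of the tree, not just over the subset of $S$ that happens to reach the node. This is fine because along a fixed coordinate, the thresholds that can induce distinct splits are determined by the (at most $m$) values $\{x_{ij}\}_{i \le m}$ across all of $S$, independent of routing; a finer count conditional on the actual subset reaching the node can only be smaller. Beyond this, the proof is straightforward VC-style product bookkeeping of the kind standard for decision and regression trees, and I do not anticipate any serious obstacle.
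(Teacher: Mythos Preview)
The paper does not actually supply a proof of this proposition; it is stated as a known fact (in the spirit of standard results from, e.g., Anthony--Bartlett) and then the text moves on. So there is no ``paper's own proof'' to compare against.

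Your proposed argument is correct and is exactly the standard growth-function/Sauer-type argument one would give for tree-structured predictors. The decomposition into (tree shape) $\times$ (feature index and threshold at each internal node) $\times$ (real label at each leaf) is the right one, and your counts are sound: at most $O(4^\ell)$ shapes, at most $d(m+1)$ behaviorally distinct (feature, threshold) pairs per internal node uniformly over $S$, and at most $k_\lambda+1$ relevant leaf values at a leaf reached by $k_\lambda$ points. The key inequality $(m+1)^I\prod_\lambda(k_\lambda+1)\le (m+1)^{I+L}\le (m+1)^\ell$ is what lets you combine the internal-node and leaf contributions, and the final inversion $2^m \le (4d(m+1))^\ell \Rightarrow m=O(\ell\log(\ell d))$ is routine. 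Two cosmetic points: the number of rooted binary tree shapes on \emph{at most} $\ell$ nodes is bounded by a small constant times $4^\ell$ rather than exactly $4^\ell$, and you are implicitly summing (not multiplying) over the choices of shape/features/thresholds before bounding each summand's leaf-product by $(m+1)^L$---worth saying explicitly, though it changes nothing in the big-$O$.
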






\begin{remark}[Extension to Large $\A$]\label{rem:largeA}
We can also extend our approach to scenarios with a large or infinite
set~$\A$ of possible algorithms.
This extension is relevant to state-of-the-art empirical approaches to
the auto-tuning of algorithms with many parameters, such as
mathematical programming solvers~\cite{H+14}; see also
the discussion in \Secc{ss:gd}.  (Instantiating all of
the parameters yields a fixed algorithm; ranging over all possible
parameter values yields the set $\A$.)
Analogous to our formalism for accommodating
a large number of possible features, we now assume that there is a
set $\F'$ of possible ``algorithm feature values'' and
a mapping $f'$ that computes the features of a given algorithm.
A performance predictor is now a map from $\F \times \F'$ to $\RR$,
taking as input the features of an algorithm $A$ and of an instance
$x$, and returning as output an estimate of $A$'s performance
on~$x$.  If $\cP$ is the set of linear predictors, for example, then
by \Propp{prop:linear} its pseudo-dimension is $d + d'$,
where $d$ and $d'$ denote the dimensions of $\F$ and $\F'$, respectively.
\end{remark}

\subsection{Application: Choosing the Step Size in Gradient
  Descent}\label{ss:gd2}

For our last PAC example, we give sample complexity results for the
problem of choosing the best step size in gradient descent.  When
gradient descent is used in practice, the step size is generally taken
much larger than the upper limits suggested by theoretical guarantees,
and often converges in many fewer iterations than with the step size
suggested by theory. This motivates the problem of learning the step
size from examples.
We view this as a baby step towards reasoning more generally about
the problem of learning good parameters for machine learning algorithms.
%
Unlike the applications we've seen so far, the parameter space here
satisfies a Lipschitz-like condition, and we can
follow the discretization approach suggested in \Rem{lip}.

\subsubsection{Gradient Descent Preliminaries}

Recall the basic gradient descent algorithm for minimizing a
function~$f$ given an initial point~$z_0$ over $\RR^n$:
\begin{enumerate}

\item Initialize $z := z_0$.

\item While $\| \grad f(z) \|_2 > \nu$: 

\begin{enumerate}

\item $z := z - \rho \cdot \grad f(z)$.

\end{enumerate}

\end{enumerate}
We take the error tolerance $\nu$ as given and focus on the more
interesting parameter, the step size~$\rho$.  Bigger values of $\rho$
have the potential to make more progress in each step, but run the
risk of overshooting a minimum of~$f$.

We instantiate the basic model (\Secc{ss:basic}) to study the
problem of learning the best step size.  There is an unknown
distribution $\dist$ over instances, where an instance $x \in \Pi$
consists of a function~$f$ and an initial point~$z_0$.  Each algorithm
$A_{\rho}$ of $\A$ is the basic gradient descent algorithm above, with
some choice~$\rho$ of a step size drawn from some fixed
interval~$[\rho_\ell,\rho_u] \subset (0,\infty)$.  The performance
measure $\cost(A,x)$ is the number of
iterations (i.e., steps) taken by the algorithm for
the instance $x$.

To obtain positive results, we need to restrict the allowable
functions~$f$ (see \Labb{Appendix}{app:gd}).
First, we assume that every function~$f$ is convex and $L$-smooth for
a known $L$. A function $f$ is {\em $L$-smooth} if it is everywhere
differentiable, and $\|\df(z_1)-\df(z_2)\| \le L\|z_1-z_2\|$
for all $z_1$ and $z_2$ (all norms in this section are the $\ell_2$
norm). Since
gradient descent is translation invariant, and $f$ is convex, we can
assume for convenience that the (uniquely attained) minimum value of
$f$ is 0, with $f(0) = 0$.

Second, we assume that the magnitudes of the initial points are
bounded, with $\|z_0\| \le Z$ for some known constant $Z > \nu$.

Third, we assume that there is a known constant $c \in (0,1)$ such
that $\|z-\rho\, \grad f(z)\| \le (1-c) \|z\|$ for all
$\rho \in [\rho_\ell,\rho_u]$.     
In other words, the norm of any point $z$ --- equivalently, the distance
to the global minimum --- decreases by some minimum factor after each
gradient descent step. We refer to this as the \emph{guaranteed progress}
condition. This is satisfied (for instance) by $L$-smooth,
$m$-strongly convex functions\footnote{A (continuously differentiable)
  function $f$ is {\em $m$-strongly convex} if $f(y) \ge f(w) + \grad
  f(w)^T(y-w) + \tfrac{m}{2} \|y-w\|^2$ for all $w,y \in \RR^n$. The
  usual notion of convexity is the same as $0$-strong convexity. Note
  that the definition of $L$-smooth implies $m \le L$.},
which is a well studied regime (see
e.g.~\cite{boyd}). The standard analysis of gradient descent implies
that $c \ge \rho m$ for $\rho \le 2/(m+L)$ over this class of functions.

Under these restrictions, we will be able to compute a nearly optimal $\rho$ given a reasonable number of samples from $\dist$.





\paragraph{Other Notation}

All norms in this section are
$\ell_2$-norms.
Unless otherwise stated, $\rho$ means $\rho$
restricted to $[\rho_\ell,\rho_u]$, and $z$ means $z$ such that $\|z\|
\le Z$. We let $g(z,\rho) := z-\rho\grad f(z)$ be the result of taking
a single gradient descent step, and $g^j(z,\rho)$ be the result of
taking $j$ gradient descent steps.

Typical textbook treatments of gradient descent assume $\rho < 2/L$ or
$\rho \le 2/(m+L)$, which give various convergence and running time
guarantees. The learning results of this section apply for any $\rho$,
but this natural threshold will still appear in our analysis and results. Let
$D(\rho) := \max\{1,L\rho-1\}$ denote how far $\rho$ is from $2/L$.

Lastly, let $H = \log(\nu/Z) / \log(1-c)$. It is easy to check that
$\cost(A_\rho, x) \le H$ for all $\rho$ and $x$.


\subsubsection{A Lipschitz-like Bound on
  \texorpdfstring{$\cost(A_\rho,x)$}{cost(Ap,x)} as a Function of
  \texorpdfstring{$\rho$}{p}.}
This will be the bulk of the argument. Our first lemma shows that for
fixed $\rho$, the gradient descent step $g$ is a Lipschitz function of
$z$, even when $\rho$ is larger than $2/L$. One might hope that the
guaranteed progress condition would be enough to show that (say) $g$
is a contraction, but the Lipschitzness of $g$ actually comes from the
$L$-smoothness. (It is not too hard to come up with non-smooth
functions that make guaranteed progress, and where $g$ is arbitrarily
non-Lipschitz.)


\begin{lemma}\label{l:boundg}
$\|g(w,\rho)-g(y,\rho)\| \le D(\rho)\|w-y\|$.
\end{lemma}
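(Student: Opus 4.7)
The plan is to set $u := w - y$ and $v := \nabla f(w) - \nabla f(y)$, so that $g(w,\rho) - g(y,\rho) = u - \rho v$ and the goal becomes $\|u - \rho v\| \le D(\rho) \|u\|$. I would expand the square:
$$
\|u - \rho v\|^2 \;=\; \|u\|^2 - 2\rho\,\langle u, v\rangle + \rho^2 \|v\|^2.
$$

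The key tool is the co-coercivity of the gradient of a convex $L$-smooth function, namely $\langle u, v \rangle \ge \tfrac{1}{L}\|v\|^2$. Since this is not among the hypotheses stated in the paper, I would either cite it as a standard fact or derive it in one short paragraph: consider $h(x) := f(x) - \langle \nabla f(y), x\rangle$, which is convex and $L$-smooth, with $\nabla h(y) = 0$ so $y$ minimizes $h$; applying the descent lemma $h(x - \tfrac{1}{L}\nabla h(x)) \le h(x) - \tfrac{1}{2L}\|\nabla h(x)\|^2$ at $x := w$ and using $h(y) \le h(w - \tfrac{1}{L}\nabla h(w))$ gives one direction, and symmetry in $w \leftrightarrow y$ yields the stated co-coercivity after adding the two inequalities.

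Plugging co-coercivity into the expansion gives
$$
\|u - \rho v\|^2 \;\le\; \|u\|^2 + \rho\!\left(\rho - \tfrac{2}{L}\right)\|v\|^2.
$$
I would then split on the sign of $\rho - 2/L$. If $\rho \le 2/L$, the second term is non-positive and $\|u - \rho v\|^2 \le \|u\|^2 = D(\rho)^2 \|u\|^2$, as $D(\rho) = 1$ in this regime. If $\rho > 2/L$, the second term is non-negative, so I bound it using $L$-smoothness $\|v\| \le L \|u\|$ to get
$$
\|u - \rho v\|^2 \;\le\; \|u\|^2 + \rho\!\left(\rho - \tfrac{2}{L}\right) L^2 \|u\|^2 \;=\; \bigl(1 - 2L\rho + L^2\rho^2\bigr)\|u\|^2 \;=\; (L\rho - 1)^2 \|u\|^2,
$$
and $L\rho - 1 > 1 > 0$ in this regime, so taking square roots matches $D(\rho) = L\rho - 1$.

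The only mildly delicate step is invoking co-coercivity; the naive triangle-inequality bound $(1 + L\rho)\|u\|$ is off by a factor of roughly $2$ from the target $|L\rho - 1|$ bound, so one genuinely needs the cancellation from the cross term $-2\rho\langle u, v\rangle$ rather than just $\|\nabla f\|$ being $L$-Lipschitz. Once co-coercivity is in hand, the rest is a two-line case analysis.
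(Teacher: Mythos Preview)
Your proof is correct and follows essentially the same route as the paper's: expand $\|u-\rho v\|^2$, apply co-coercivity $\langle u,v\rangle \ge \tfrac{1}{L}\|v\|^2$ to obtain $\|u\|^2 + \rho(\rho-\tfrac{2}{L})\|v\|^2$, and then split on the sign of $\rho - 2/L$, using $\|v\|\le L\|u\|$ in the large-$\rho$ case. The only addition is your derivation sketch for co-coercivity, which the paper simply cites as a known property of $L$-smooth convex functions.
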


\begin{proof}
For notational simplicity, let $\a = \|w-y\|$ and $\b = \|\df(w)-\df(y)\|$.
Now,
\begin{align*}
\|g(w,\rho)-g(y,\rho)\|^2 & = \|(w-y)-\rho(\df(w)-\df(y))\|^2 \\
& = \a^2 + \rho^2\b^2 - 2\rho\la\a,\b\ra \\
& \le \a^2 + \rho^2\b^2 - 2\rho\b^2/L \\
& = \a^2 + \b^2\rho(\rho-2/L).
\end{align*}
The only inequality above is a restatement of a property of $L$-smooth
functions called the co-coercivity of the gradient, namely that
$\la\a,\b\ra \ge \b^2/L$.

Now, if $\rho \le 2/L$, then $\rho(\rho-2/L) \le 0$, and we're
done. Otherwise, $L$-smoothness implies $\b \le L\a$, so the above is
at most $\a^2(1 + L\rho(L\rho-2))$, which is the desired result.
\end{proof}

The next lemma bounds how far two gradient descent paths can drift
from each other, if they start at the same point. The main thing to
note is that the right hand side goes to 0 as $\eta$ becomes close to
$\rho$.

\begin{lemma}\label{l:boundgj}
For any $z$, $j$, and $\rho \le \eta$,
\[ \|g^j(z,\rho) - g^j(z,\eta)\| \le (\eta-\rho)\frac{D(\rho)^jLZ}{c}. \]
\end{lemma}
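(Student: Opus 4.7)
The plan is to prove the lemma by induction on $j$, with the base case $j = 0$ being trivial since both sides are zero (or $\ge 0$). For the inductive step, the idea is to split the difference at iteration $j+1$ using a triangle inequality with an intermediate point that applies step size $\rho$ to the $\eta$-iterate, so that one term measures how $g$ responds to perturbing its first argument (handled by \Lem{boundg}) and the other measures how $g$ responds to perturbing $\rho$ at a fixed point (handled by $L$-smoothness).

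Concretely, let $\Delta_j := \|g^j(z,\rho) - g^j(z,\eta)\|$. Writing $g^{j+1}(z,\rho) = g(g^j(z,\rho),\rho)$, insert the intermediate point $g(g^j(z,\eta),\rho)$ and apply the triangle inequality to get
\[
\Delta_{j+1} \le \|g(g^j(z,\rho),\rho) - g(g^j(z,\eta),\rho)\| + \|g(g^j(z,\eta),\rho) - g(g^j(z,\eta),\eta)\|.
\]
By \Lem{boundg} the first term is at most $D(\rho)\Delta_j$. For the second, note that it equals $(\eta-\rho)\|\df(g^j(z,\eta))\|$. Since $f$ attains its minimum at $0$ with $f(0)=0$, we have $\df(0)=0$, so $L$-smoothness gives $\|\df(y)\| \le L\|y\|$ for any $y$. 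Applying the guaranteed progress condition iteratively to the $\eta$-trajectory yields $\|g^j(z,\eta)\| \le (1-c)^j\|z\| \le (1-c)^j Z$, so the second term is at most $(\eta-\rho)LZ(1-c)^j$.

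Unrolling the recursion $\Delta_{j+1} \le D(\rho)\Delta_j + (\eta-\rho)LZ(1-c)^j$ with $\Delta_0 = 0$ produces
\[
\Delta_j \le (\eta-\rho)LZ\sum_{i=0}^{j-1} D(\rho)^{j-1-i}(1-c)^i.
\]
Since $D(\rho) \ge 1$ we may pull out $D(\rho)^{j-1} \le D(\rho)^j$, and then $\sum_{i=0}^{j-1}(1-c)^i \le 1/c$, giving $\Delta_j \le (\eta-\rho)D(\rho)^j LZ/c$ as desired.

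The only real subtlety is the second term of the triangle inequality: one must avoid a naive bound like $\|\df(y)\| \le LZ$ at every step, which would blow up the sum. Using the guaranteed progress to show that $\eta$-iterates shrink geometrically, and then summing the geometric series in $(1-c)$, is what keeps the bound from accumulating a factor of $j$; $L$-smoothness applied at the minimizer is exactly what converts the progress bound on $\|y\|$ into a bound on $\|\df(y)\|$. I expect no other real obstacle beyond getting this bookkeeping right.
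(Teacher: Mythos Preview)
Your proposal is correct and follows essentially the same route as the paper: the paper likewise splits $g^{j+1}(z,\rho)-g^{j+1}(z,\eta)$ into a ``change of first argument'' piece bounded via \Lem{boundg} and an $(\eta-\rho)\df(\cdot)$ piece bounded via $L$-smoothness plus guaranteed progress, obtains the identical recurrence $r_{j+1}\le D(\rho)r_j+(\eta-\rho)LZ(1-c)^j$, and unrolls it the same way using $D(\rho)\ge 1$ and the geometric sum in $(1-c)$.
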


\begin{proof}
We first bound $\|g(w,\rho) - g(y,\eta)\|$, for any $w$ and $y$. We have
\[
g(w,\rho) - g(y,\eta) = [w - \rho\df(w)] - [y - \eta\df(y)] = g(w,\rho)- [g(y,\rho) - (\eta-\rho)\df(y)]
\]
by definition of $g$. The triangle inequality and \Lemm{l:boundg} then give
\[ \|g(w,\rho) - g(y,\eta)\| = \|g(w,\rho)-g(y,\rho) + (\eta-\rho)\df(y)\| \le D(\rho)\,\|w-y\| + (\eta-\rho)\|\df(y)\|. \]
Plugging in $w = g^j(z,\rho)$ and $y = g^j(z,\eta)$, we have
\[ \|g^{j+1}(z,\rho) - g^{j+1}(z,\eta)\| \le D(\rho)\,\|g^j(z,\rho) - g^j(z,\eta)\| + (\eta-\rho)\|\df(g^j(z,\eta))\| \]
for all $j$.

Now,
\[ \|\df(g^j(z,\eta))\| \le L \|g^j(z,\eta)\| \le L\|z\|(1-c)^j \le
LZ(1-c)^j, \]
where the first inequality is from $L$-smoothness, and the
second is from the guaranteed progress condition. Letting
$r_j = \|g^j(z,\rho) - g^j(z,\eta)\|$, we now have the simple recurrence $r_0
= 0$, and $r_{j+1} \le D(\rho)\,r_j + (\eta-\rho)LZ(1-c)^j$. One can
check via induction that
\[ r_{j+1} \le D(\rho)^j(\eta-\rho)LZ \sum_{i = 0}^j (1-c)^iD(\rho)^{-i} \]
for all $j$.
Recall that $D(\rho) \ge 1$. Rounding $D(\rho)^{-i}$ up to 1 and doing the summation gives the desired result.
\end{proof}

Finally, we show that $\cost(A_\rho,x)$ is essentially Lipschitz in
$\rho$. The ``essentially'' is necessary, since $\cost$ is
integer-valued.

\begin{lemma}\label{l:boundcost}
$|\cost(A_{\rho},x) - \cost(A_\eta,x)| \le 1$ for all $x$, $\rho$, and
  $\eta$ with $0 \le \eta-\rho \le \frac{\nu c^2}{LZ}D(\rho)^{-H}$.
\end{lemma}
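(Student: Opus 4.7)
The plan is to apply Lemma \ref{l:boundgj} with the hypothesis to bound the drift between the two trajectories $g^j(z_0,\rho)$ and $g^j(z_0,\eta)$ uniformly across all iterations of interest, and then use $L$-smoothness together with the guaranteed progress condition to show that such a small drift is incompatible with the two stopping times differing by more than one.

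First I would substitute the hypothesis $\eta - \rho \le \frac{\nu c^2}{LZ} D(\rho)^{-H}$ into Lemma \ref{l:boundgj} to obtain the master drift bound
\[ \|g^j(z_0,\rho) - g^j(z_0,\eta)\| \;\le\; \nu c \cdot D(\rho)^{j-H} \;\le\; \nu c \]
for every $j \le H$. Since both $\cost(A_\rho,x)$ and $\cost(A_\eta,x)$ are at most $H$, this bound holds throughout the executions of both algorithms. Observe that the hypothesis is calibrated precisely so that $\nu c$ (rather than anything larger) appears on the right: the $c^2/L$ in the numerator of the hypothesis contributes the factor $c$ needed to absorb one future step of guaranteed progress, and the $D(\rho)^{-H}$ neutralizes the worst-case compounding from Lemma \ref{l:boundg} when $\rho > 2/L$.

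Next, writing $w_j = g^j(z_0,\rho)$ and $y_j = g^j(z_0,\eta)$, I would argue that if $\cost(A_\rho,x) = j$ then $\cost(A_\eta,x) \le j+1$. By guaranteed progress, $\|y_{j+1}\| \le (1-c)\|y_j\| \le (1-c)(\|w_j\| + \nu c) \le (1-c)(Z(1-c)^j + \nu c)$, where the last step uses guaranteed progress along the $\rho$-trajectory. Translating via $L$-smoothness (and $\nabla f(0)=0$) gives $\|\nabla f(y_{j+1})\| \le L\|y_{j+1}\|$, which the choice of constants in the hypothesis forces to sit at or below $\nu$, triggering the termination check for $A_\eta$ at step $j+1$. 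The direction $\cost(A_\rho,x) \le \cost(A_\eta,x)+1$ follows from a symmetric argument: the drift bound in Lemma \ref{l:boundgj} is symmetric in the two trajectories as a quantity, so the same reasoning with the roles of $\rho$ and $\eta$ exchanged goes through.

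The main obstacle I expect is the constant chasing in the middle step, where a single extra iteration of guaranteed progress must simultaneously (i) cancel the $\nu c$ position drift between the two trajectories and (ii) push the gradient-norm strictly below the threshold $\nu$, all while accommodating the possibility $\rho > 2/L$ where $D(\rho) > 1$. The presence of $c^2$ (not merely $c$) in the hypothesis, plus the factor $D(\rho)^{-H}$, should exactly supply the slack needed to carry out this estimate; the bulk of the work will be to keep track of the constants through the chain of inequalities so that the final comparison to $\nu$ comes out in the right direction.
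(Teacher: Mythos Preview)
There is a genuine gap in the middle step. You correctly extract the uniform drift bound $\|w_j - y_j\| \le \nu c$ for all $j \le H$, but you then control $\|w_j\|$ only by the generic guaranteed-progress estimate $Z(1-c)^j$. That bound holds for \emph{every} $j$ and uses nothing about the fact that $A_\rho$ terminated at step $j$; in particular, the hypothesis $\eta-\rho \le \tfrac{\nu c^2}{LZ}D(\rho)^{-H}$ constrains only the difference $\eta-\rho$, not the size of $Z(1-c)^j$. So your final chain $L\|y_{j+1}\| \le L(1-c)\bigl(Z(1-c)^j + \nu c\bigr) \le \nu$ cannot close: the term $LZ(1-c)^{j+1}$ may be of order $LZ$, which can be far larger than $\nu$. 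No amount of constant-chasing in the last paragraph will fix this, because the offending term simply does not depend on the constants in the hypothesis.

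What is missing is to actually \emph{use} the termination event. The paper argues as follows: take $j$ to be the smaller of the two costs (without loss of generality $j = \cost(A_\eta,x)$), so that termination directly gives $\|g^j(z_0,\eta)\| \le \nu$. The drift bound and the triangle inequality then yield $\|g^j(z_0,\rho)\| \le \nu + \nu c$, and one further step of guaranteed progress --- which shrinks the norm by at least $c\nu$ whenever the current norm exceeds $\nu$ --- brings $\|g^{j+1}(z_0,\rho)\|$ down to at most $(\nu + \nu c) - \nu c = \nu$. This is exactly why the hypothesis carries a $c^2$: one factor of $c$ is consumed by the $1/c$ in Lemma~\ref{l:boundgj}, and the remaining $\nu c$ of drift is precisely what a single step of guaranteed progress can absorb. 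Your route via $Z(1-c)^j$ never invokes the termination, so it cannot reach a bound comparable to $\nu$.
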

\begin{proof}
Assume that $\cost(A_{\eta},x) \le
\cost(A_\rho,x)$; the argument in the other case is similar.
Let $j = \cost(A_{\eta},x)$, and recall that $j \le
H$.  By \Lemm{l:boundgj}, $\|g^j(x,\rho)- g^j(x,\eta)\| \le \nu c$.
Hence, by the triangle inequality,
$$\|g^j(x,\rho)\| \le \nu c +  \|g^j(x,\eta)\| \le \nu c + \nu.$$

Now, by the guaranteed progress condition, $\|w\| - \|g(w,p)\| \ge
c\|w\|$ for all $w$. Since we only run a gradient descent step on $w$
if $\|w\| > \nu$, each step of gradient descent run by any algorithm in $\A$
drops the magnitude of $w$ by at least $\nu c$.

Setting $w = g^j(x,\rho)$, we see that either $\|g^j(x,\rho)\| \le \nu$,
and $\cost(A_\rho, x) = j$, or that
$\|g^{j+1}(x,\rho)\| \le (\nu c + \nu) - \nu c = \nu$, and $\cost(A_\rho, x) = j+1$, as
desired.
\end{proof}

\subsubsection{Learning the Best Step Size}
We can now apply the discretization approach suggested by
\Rem{lip}. Let $K = \frac{\nu c^2}{LZ}D(\rho_u)^{-H}$. Note
that since $D$ is an increasing function, $K$ is less than or equal to
the $\frac{\nu c^2}{LZ}D(\rho)^{-H}$ of \Lemm{l:boundcost} for
every $\rho$.  Let $N$ be a minimal $K$-net, such as all integer
multiples of $K$ that lie in $[\rho_{\ell},\rho_u]$.
Note that $|N| \le \rho_u/K + 1$.

We tie everything together in the theorem
below.\footnote{Alternatively, this guarantee can be phrased in terms
  of the fat-shattering dimension (see e.g.~\cite{AB}).
In particular, $\A$ has
  1.001-fat shattering dimension at most $\log |N| = \tilde{O}(H)$.}

\begin{theorem}[Learnability of Step Size in Gradient Descent]\hspace{-.07in}
There is a learning algorithm that $(1+\eps,\delta)$-learns the optimal
algorithm in $\A$ using $m = \tilde{O}(H^3/\eps^2)$ samples from $\dist$.%
\footnote{We use $\tilde{O}(\cdot)$ to suppress logarithmic
  factors in $Z/\nu, c, L$ and $\rho_u$.}
\end{theorem}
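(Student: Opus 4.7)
The plan is to combine the Lipschitz-like \Lemm{l:boundcost} with a standard discretization-plus-uniform-convergence argument on the finite net $N$ already defined in the excerpt.

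First, I would observe that since $N$ is a finite subset of $[\rho_\ell,\rho_u]$, the restricted algorithm class $\A_N := \{A_\eta : \eta \in N\}$ has pseudo-dimension at most $\log_2|N|$ (as noted in \Secc{ss:pseudo} for finite classes). Plugging $K = \frac{\nu c^2}{LZ}D(\rho_u)^{-H}$ into $|N| \le \rho_u/K + 1$ and taking logarithms gives
\[
\log_2 |N| \;=\; O\!\left( H\log D(\rho_u) + \log\!\tfrac{\rho_u LZ}{\nu c^2}\right) \;=\; \tilde{O}(H),
\]
where the $\tilde{O}$ hides logarithmic factors in $Z/\nu$, $c$, $L$, and $\rho_u$, exactly as allowed by the theorem statement.

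Next, I would invoke \Cor{ub} on $\A_N$. Since $\cost(A_\rho,x) \in [0,H]$ for every $\rho \in [\rho_\ell,\rho_u]$ and every $x$, the range parameter in \Thmm{t:pseudo} is $H$, and the sample complexity bound
\[
m \;\ge\; c\left(\frac{H}{\eps}\right)^{2}\!\left(\log_2|N| + \ln(1/\delta)\right) \;=\; \tilde{O}\!\left(\frac{H^3}{\eps^2}\right)
\]
suffices for an ERM algorithm over $\A_N$ to output, with probability $\ge 1-\delta$, some $A_{\hat\eta} \in \A_N$ whose expected cost is within $\eps$ of $\min_{\eta \in N}\expect[x\sim\dist]{\cost(A_\eta,x)}$. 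The learning algorithm is therefore: draw the $m$ samples, evaluate every $\eta \in N$ on all of them, and return the $A_{\hat\eta}$ minimizing the empirical average cost.

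Finally, I would bridge from the best algorithm in $N$ to the best algorithm in $\A$. Let $\rho^\star$ minimize $\expect[x\sim\dist]{\cost(A_\rho,x)}$ over $[\rho_\ell,\rho_u]$. By construction of $N$ as a $K$-net, there exists $\eta^\star \in N$ with $|\eta^\star - \rho^\star| \le K$, and since $K$ is at most $\frac{\nu c^2}{LZ}D(\rho^\star)^{-H}$ (because $D$ is non-decreasing and $\rho^\star \le \rho_u$), \Lemm{l:boundcost} gives $|\cost(A_{\eta^\star},x) - \cost(A_{\rho^\star},x)| \le 1$ for every $x$. Taking expectations yields $\expect{\cost(A_{\eta^\star},x)} \le \expect{\cost(A_{\rho^\star},x)} + 1$. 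Chaining: the returned $A_{\hat\eta}$ has expected cost at most $\expect{\cost(A_{\eta^\star},x)} + \eps \le \expect{\cost(A_{\rho^\star},x)} + 1 + \eps$, which is the claimed $(1+\eps,\delta)$-learning guarantee.

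The only non-routine step is the bookkeeping to confirm that $\log|N|$ really is $\tilde{O}(H)$ rather than $\tilde{O}(H^2)$ — the factor $D(\rho_u)^H$ inside $K^{-1}$ could have been alarming, but it contributes only $H \log D(\rho_u)$ to the logarithm, which is absorbed into $\tilde{O}(H)$ under the stated convention on suppressed logarithmic factors. Everything else is a direct application of the already-proven \Cor{ub} and \Lemm{l:boundcost}.
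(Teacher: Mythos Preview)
Your proposal is correct and follows essentially the same argument as the paper: bound the pseudo-dimension of the finite net $\A_N$ by $\log|N| = \tilde{O}(H)$, apply \Cor{ub} to get an ERM learner on $\A_N$ with sample complexity $\tilde{O}(H^3/\eps^2)$, and then use \Lemm{l:boundcost} to show the best algorithm in $N$ is within $1$ (in expected cost) of the best in all of $\A$. Your exposition is in fact slightly more explicit than the paper's in verifying that $\log|N| = \tilde{O}(H)$ and in spelling out the chaining inequality.
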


\begin{proof}
The pseudo-dimension of $\A_N = \{A_\rho : \rho \in N\}$ is at most $\log |N|$,
since $\A_N$ is a finite set.  Since $\A_N$ is finite, it also trivially admits an ERM algorithm $L_N$,
and \Cor{ub} implies that $L_N$
$(\eps,\delta)$-learns the optimal algorithm in $\A_N$ using
$m = \tilde{O}(H^2\log |N|/\eps^2)$ samples.

Now, \Lemm{l:boundcost} implies that for every $\rho$, there is a
$\eta \in N$ such that, for every distribution $\dist$,
 the difference in expected costs of $A_\eta$
and $A_\rho$ is at most 1.
Thus $L_N$
$(1+\eps,\delta)$-learns the optimal algorithm in $\A$ using $m =
\tilde{O}(H^2\eps^{-2} \log |N|)$ samples.

Since $\log|N| = \tilde{O}(H)$, we get the desired result.
\end{proof}



\section{Online Learning of Application-Specific Algorithms}\label{s:noregret}


This section studies the problem of learning the best
application-specific algorithm {\em online}, with instances arriving
one-by-one.\footnote{The online model is obviously relevant when
training data arrives over time.  Also, even with offline data sets
that are very large, it can be computationally necessary to process
training data in a one-pass, online fashion.}
The goal is choose an algorithm at each time step, before
seeing the next instance, so that the average performance
is close to that of the best fixed algorithm in hindsight.
This contrasts with the statistical (or ``batch'') learning setup
used in \Secc{s:basic}, where the goal was to identify a single
algorithm from a batch of training instances that generalizes well to
future instances from the same distribution.
For many of the motivating examples in \Secc{s:scenarios}, both
the statistical and online learning approaches are relevant.
The distribution-free online learning formalism of this section may be
particularly appropriate when instances cannot be modeled
as i.i.d.\ draws from an unknown distribution.

\subsection{The Online Learning Model}\label{ss:online}

Our online learning model shares with the basic model of
\Secc{ss:basic} a computational or optimization
problem~$\Probl$ (e.g., MWIS), a set $\A$ of algorithms for $\Probl$
(e.g., a single-parameter family of greedy heuristics), and a
performance measure $\cost:\A \times \Probl \rightarrow [0,1]$ (e.g.,
the total weight of the returned solution).%
\footnote{One could also have $\cost$ take values in $[0,H]$ rather
than $[0,1]$, to parallel the PAC setting; we set $H=1$ here since the dependence
on $H$ will not be interesting.}
Rather than modeling the specifics of an application domain via an
unknown distribution $\dist$ over instances, however, we use an
unknown instance {\em sequence} $x_1,\ldots,x_T$.\footnote{For simplicity, we
  assume that the time horizon~$T$ is known.  This assumption can be
  removed by standard doubling techniques (e.g.~\cite{CB}).}

A learning algorithm now outputs a sequence $A_1,\ldots,A_T$ of
algorithms, rather than a single algorithm.  Each algorithm~$A_i$ is
chosen (perhaps probabilistically) with knowledge only of the previous
instances $x_1,\ldots,x_{i-1}$.  The standard goal in online
learning is to choose $A_1,\ldots,A_T$ to minimize the worst-case
(over $x_1,\ldots,x_T$) {\em regret}, defined as the average
performance loss relative to the best algorithm $A \in \A$ in
hindsight:%
\footnote{Without loss of generality, we assume $\cost$ corresponds to a maximization objective.}

\begin{equation}\label{eq:regret}
\frac{1}{T} \left(\sup_{A \in \A} \sum_{t=1}^T \cost(A,x_i)  -
\sum_{t=1}^T \cost(A_i,x_i)\right).
\end{equation}
A {\em no-regret} learning algorithm has expected (over its coin
tosses) regret~$o(1)$, as $T  \rightarrow \infty$, for every instance
sequence.
The design and analysis of no-regret online learning algorithms is a
mature field (see e.g.~\cite{CB}).
For example, many no-regret online learning algorithms are known for
the case of a finite set $|\A|$ (such as the ``multiplicative weights''
algorithm).

\subsection{An Impossibility Result for Worst-Case Instances}\label{ss:lb}


This section proves an impossibility result for no-regret online
learning algorithms for the problem of application-specific algorithm
selection.  We show this
for the running example in \Secc{ss:greedy2}:
maximum-weight independent set (MWIS) heuristics%
\footnote{Section~\ref{ss:greedy2} defined
adaptive and non-adaptive versions of the MWIS
  heuristic.  All of the
  results in \Secc{s:noregret} apply to both, so we usually won't
  distinguish between them.}
that, for
some parameter $\rho \in [0,1]$, process the vertices in order of
nonincreasing value of $w_v/(1+\deg(v))^{\rho}$.  Let $\A$ denote the
set of all such MWIS algorithms.
Since $\A$ is an infinite set,
standard no-regret results (for a finite number of
actions) do not immediately apply.  In online learning, infinite sets
of options are normally controlled through a Lipschitz condition,
stating that ``nearby'' actions always yield approximately the same
performance; our set $\A$ does not possess such a Lipschitz property
(recall \Rem{lip}).  The next section shows that these
issues are not mere technicalities --- there is enough complexity in
the set $\A$ of MWIS heuristics to preclude a no-regret learning
algorithm.

\subsubsection{A Hard Example for MWIS}
\def\on1{o_n(1)}

We show a distribution over sequences of MWIS instances for which
every (possibly randomized) algorithm has expected regret $1-\on1$.
Here and for this rest of this section, by $\on1$
we mean a function that
is independent of $T$ and tends to 0 as the number of vertices $n$
tends to infinity. Recall that $\cost(A_\rho,x)$ is the total
weight of the returned independent set, and we are trying to maximize
this quantity. The key construction is the following:

\begin{lemma}\label{l:rsexample}
For any constants $0 < r < s < 1$, there exists a MWIS instance $x$ on
at most $n$ vertices
such that $\cost(A_\rho,x) = 1$ when $\rho \in (r,s)$, and
$\cost(A_\rho,x) = \on1$ when $\rho < r$ or $\rho > s$.
\end{lemma}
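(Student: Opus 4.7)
My plan is to build a single MWIS instance containing (i) a set of $N$ "prize" vertices $p_1,\ldots,p_N$ whose simultaneous selection forms a maximum-weight independent set of total weight exactly $1$, together with (ii) two carefully calibrated "trap" vertices $t$ and $t'$ chosen so that $t$'s greedy score overtakes the prize vertices' score exactly at the boundary $\rho=s$, and $t'$'s score overtakes the prize vertices' score exactly at $\rho=r$ (from below). Both traps will be adjacent to every $p_i$ and to each other; each trap will come with weight-$0$ "dummy" neighbors that serve only to tune its degree. Since the score-ratio of any two vertices is monotone in $\rho$, I need two separate traps --- one for each boundary --- and the sign of the degree gap to the prize vertices must be opposite for $t$ and $t'$ (so that $t$ wins by low degree at large $\rho$ while $t'$ wins by high weight at small $\rho$).

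Concretely, set weight $1/N$ on each $p_i$ and give $p_i$ its own set of $N$ private weight-$0$ dummy neighbors, so $\deg(p_i)=N+2$. Let $t$ be adjacent to all $p_i$'s and to $t'$ (giving $\deg(t)=N+1$), and let $t'$ be adjacent to all $p_i$'s, to $t$, and to two private weight-$0$ dummies (giving $\deg(t')=N+3$). Choose
\[
w_t \;:=\; \frac{1}{N}\left(\frac{N+2}{N+3}\right)^{s},
\qquad
w_{t'} \;:=\; \frac{1}{N}\left(\frac{N+4}{N+3}\right)^{r}.
\]
A direct comparison of scores $w/(1+\deg)^\rho$ shows that $p_i$ beats $t$ iff $((N{+}2)/(N{+}3))^\rho > ((N{+}2)/(N{+}3))^s$, which (since the base is in $(0,1)$) is exactly $\rho<s$; similarly $p_i$ beats $t'$ iff $\rho>r$. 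So in the three regimes the global argmax among scores is (a) some $p_i$ for $\rho\in(r,s)$, (b) $t$ for $\rho>s$, and (c) $t'$ for $\rho<r$ (in the last two cases the "other" trap is dominated by the $p_i$'s, so it cannot be the argmax). The total vertex count is $N^2+N+4$, so taking $N=\lfloor\sqrt{n}\rfloor$ keeps the instance within the $n$-vertex budget.

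I will then trace the greedy execution in each regime. In case (a) the $p_i$'s are pairwise non-adjacent and share the top score; after picking $p_1$ (breaking the tie lexicographically) the remaining $p_i$'s remain tied and highest (their residual neighbors are weight-$0$ dummies and $t,t'$ are gone), so all $p_i$'s are selected for $\cost=N\cdot(1/N)=1$. In cases (b) and (c), the winning trap is adjacent to every $p_i$ and to the other trap, so after it is selected the only remaining vertices are weight-$0$ dummies; hence $\cost\in\{w_t,w_{t'}\}=O(1/N)=\on1$. The main obstacle I anticipate is the tension that made the earlier attempts fail, namely that a trap for the $\rho<r$ regime naturally wants weight exceeding $1$; this is resolved by spreading the optimum across $N$ prizes of weight $1/N$ each, so that a single trap of weight $O(1/N)$ suffices to wipe out the entire optimum when it blocks all the prizes simultaneously. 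A secondary point is that the argument must work for both the adaptive and non-adaptive MWIS variants: in the non-adaptive case scores never change, and in the adaptive case every $p_i$ sees its degree drop uniformly once $t$ and $t'$ are removed, so the relative ordering of the $p_i$'s against the remaining (weight-$0$) dummies is preserved.
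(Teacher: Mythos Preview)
Your construction is correct and works for both the adaptive and non-adaptive heuristics; the score comparisons, the greedy traces in all three regimes, and the cost computations all check out. (One trivial quibble: with $N=\lfloor\sqrt{n}\rfloor$ the vertex count $N^2+N+4$ can exceed $n$ by a lower-order amount; taking, say, $N=\lfloor\sqrt{n}\rfloor-1$ fixes this.)

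Your route is genuinely different from the paper's. The paper builds a three-tier graph with vertex classes $A,B,C$ of sizes roughly $m^2,m^3,m^2$ (so $m\approx n^{1/3}$), where $(A,B)$ is complete bipartite and $(B,C)$ is a union of stars; the degree gaps that create the score crossovers at $\rho=r$ and $\rho=s$ arise from this bipartite structure rather than from dummy vertices, and all weights are strictly positive. The ``prize'' is the large middle set $B$ (total weight $1$), while $A$ and $C$ play the roles of your two traps. Your construction is more direct: you engineer each boundary with a single dedicated trap vertex and tune degrees via weight-$0$ pendants, which makes the crossing calculations one-liners and even yields a slightly faster $o_n(1)$ decay ($O(n^{-1/2})$ versus roughly $O(n^{-(1-r)/3})$). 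The paper's construction, on the other hand, avoids zero-weight vertices, which makes it a bit more robust if one later wants to perturb weights (as in the smoothed analysis of the next subsection).
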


\begin{proof}
Let $A, B$, and $C$ be 3 sets of vertices of sizes $m^2-2, m^3-1,$ and
$m^2+m+1$ respectively, such that their sum $m^3+2m^2+m$ is between
$n/2$ and $n$. Let $(A,B)$ be a complete bipartite graph. Let $(B,C)$ also be a
bipartite graph, with each vertex of $B$ connected to exactly one
vertex of $C$, and each vertex of $C$ connected to exactly $m-1$
vertices of $B$. See \Figg{f:badmwis}.

\begin{figure}
\begin{center}
\includegraphics[width=1in]{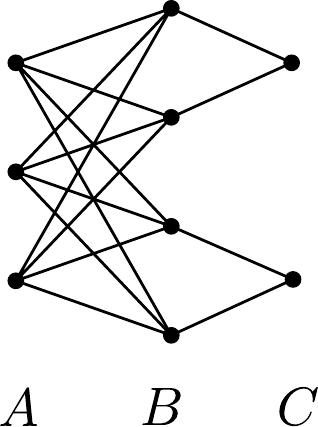}
\caption{A rough depiction of the MWIS example from \Lemm{l:rsexample}.}
\label{f:badmwis}
\end{center}
\end{figure}

Now, set the weight of every vertex in $A$, $B$, and $C$ to $tm^r$,
$t$, and $tm^{-s})$, respectively, for $t = (m^3-1)^{-1}$.
\Fig{mwiscalcs} summarizes some straightforward calculations.
We now calculate the cost of $A_\rho$ on this instance.

\begin{figure}
\begin{center}
\begin{tabular}{c|@{\hspace{.13in}}cc@{\hspace{.2in}}|@{\hspace{.17in}}ccc}
  & size & weight & deg & weight/(deg+1)$^\rho$ & size $\times$ weight \\ \hline
$A$ & $m^2-2$    & $tm^r$    & $m^3-1$   & $tm^{r-3\rho}$ & $\on1$  \\
$B$ & $m^3-1$    & $t$       & $m^2-1$   & $tm^{-2\rho}$ & 1\\
$C$ & $m^2+m+1$  & $tm^{-s}$  & $m-1$     & $tm^{-s-\rho}$ & $\on1$ \\
\end{tabular}
\caption{Details and simple calculations for the vertex sets comprising the MWIS example from \Lemm{l:rsexample}.}
\label{fig:mwiscalcs}
\end{center}
\end{figure}

If $\rho < r$, the algorithm $A_\rho$ first chooses a vertex in $A$, which
immediately removes all of $B$, leaving at most $A$ and $C$ in the
independent set. The total weight of $A$ and $C$ is $\on1$, so
$\cost(A_\rho)$ is $\on1$.

If $\rho > s$, the algorithm first chooses a vertex in $C$, which
removes a small chunk of $B$. In the non-adaptive setting, $A_\rho$
simply continues choosing vertices of $C$ until $B$ is gone. In the
adaptive setting, the degrees of the remaining elements of $B$ never
change, but the degrees of $A$ decrease as we pick more and more
elements of $C$. We eventually pick a vertex of $A$, which immediately
removes the rest of $B$. In either case, the returned independent set
has no elements from $B$, and hence has $\cost$ $\on1$.

If $\rho \in (r,s)$, the algorithm first picks a vertex of $B$,
immediately removing all of $A$, and one element of $C$. The remaining
graph comprises $m-2$ isolated vertices of $B$ (which get added to the
independent set), and $m^2+m$ stars with centers in $C$ and leaves in
$B$. It is easy to see that both the adaptive and the non-adaptive
versions of the heuristic return exactly $B$.
\end{proof}

We are now ready to state the main result of this section.
\begin{theorem}[Impossibility of Worst-Case Online Learning]\label{t:lb}
There is a distribution on MWIS input sequences over which every
algorithm has expected regret $1-\on1$.
\end{theorem}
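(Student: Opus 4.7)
\noindent
The plan is to invoke Yao's minimax principle: it suffices to exhibit a distribution over input sequences $x_1,\ldots,x_T$ under which every deterministic online algorithm has expected regret at least $1-o_n(1)$. The construction will be a random nested-interval scheme built from \Lemm{l:rsexample}. Any $\rho$ lying in the final (random) subinterval attains cost $1$ on every round, so the best fixed algorithm in hindsight is near-optimal. An online algorithm, on the other hand, must commit to $\rho_t$ before seeing $x_t$ and is effectively forced to guess which one of $k$ shrinking subintervals of parameter space is ``in play'' at round $t$, so its expected per-round cost will be at most $1/k+o_n(1)$.

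Concretely, I would fix a parameter $k=k(n)\to\infty$, let $I_0=[1/4,3/4]$, and for each $t=1,\ldots,T$ partition $I_{t-1}$ into $k$ equal-length subintervals and draw $I_t$ uniformly from among them. Let $(r_t,s_t)$ be the endpoints of $I_t$, and let $x_t$ be the MWIS instance produced by \Lemm{l:rsexample} for this pair. For the best-in-hindsight bound, the intervals are nested and nonempty, so any $\rho^*\in I_T$ satisfies $\cost(A_{\rho^*},x_t)=1$ for every $t$, giving $\sup_{\rho}\sum_t \cost(A_{\rho},x_t)\ge T$. For the learner's bound, fix any deterministic algorithm and augment its information after round $t$ with the interval $I_t$ itself (which can only help). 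Then $\rho_t$ is a function of $I_1,\ldots,I_{t-1}$; conditional on this history, $I_t$ is uniform over the $k$ subintervals partitioning $I_{t-1}$, so $\prob{\rho_t\in I_t\mid I_1,\ldots,I_{t-1}}\le 1/k$. Since $\cost(A_{\rho_t},x_t)\le 1$ always and equals $o_n(1)$ whenever $\rho_t\notin(r_t,s_t)$, we obtain $\expect{\cost(A_{\rho_t},x_t)}\le 1/k+o_n(1)$. Summing over $t$, dividing by $T$, and letting $k=k(n)\to\infty$ yields expected per-round regret at least $1-o_n(1)$.

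The one delicate point is the uniformity of the $o_n(1)$ term of \Lemm{l:rsexample} across all realized pairs $(r_t,s_t)$: inspecting the construction there, the error behaves like $m^{r-1}+m^{-s-1}$ and degrades as $r\to 1$ or $s\to 0$. I would handle this by pinning $I_0$ to a closed subinterval of $(0,1)$ (e.g., $[1/4,3/4]$), which keeps every realized $(r_t,s_t)$ safely inside $(0,1)$ and makes the error uniformly $o_n(1)$; this (together with verifying that the conditional argument really gives a deterministic adversarial bound suitable for Yao) is the main obstacle, but it is easily accommodated by choosing $k(n)$ to grow slowly enough with $n$ that the $(r_t,s_t)$ endpoints stay bounded away from $0$ and $1$.
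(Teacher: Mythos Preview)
Your proposal is correct and follows essentially the same nested-random-interval approach as the paper: the paper shrinks the interval by a factor of $n$ at each step (choosing the new subinterval uniformly within the old one, starting from $(0,1)$), argues that the learner lands in the current interval with probability at most $1/n$ conditional on the history, and observes that any $\rho$ in the final interval achieves cost $1$ on every round. Your discrete $k$-way partition and explicit appeal to Yao are cosmetic variations on the same idea.

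One point where you are actually more careful than the paper: the paper takes $I_0=(0,1)$ and does not address the uniformity of the $o_n(1)$ term from \Lemm{l:rsexample} when $r_t$ drifts close to $1$ (where the residual weight $m^{r_t-1}$ need not be small). Your choice $I_0=[1/4,3/4]$ forces $r_t\le 3/4$ for all $t$ and makes $m^{r_t-1}\le m^{-1/4}=o_n(1)$ uniformly, which is the right fix. Your final remark that $k(n)$ must grow ``slowly enough'' to keep the endpoints bounded away from $0$ and $1$ is slightly off, however: once $I_0\subseteq[1/4,3/4]$, nestedness alone pins the endpoints regardless of $k$; the only requirement on $k$ is $k(n)\to\infty$ so that $1/k=o_n(1)$. (Also, the degradation is only as $r\to 1$; the $m^{-s-1}$ term is at most $m^{-1}$ for all $s\ge 0$.)
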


\begin{proof}
Let $t_j = (r_j,s_j)$ be a distribution over sequences of
nested intervals with $s_j -r_j = n^{-j}$,
$t_0 = (0,1)$, and with $t_j$ chosen uniformly at random from
within $t_{j-1}$. Let $x_j$ be an MWIS instance on up to $n$ vertices such that
$\cost(A_\rho,x) = 1$ for $\rho \in (r_j,s_j)$, and
$\cost(A_\rho,x) = \on1$ for $\rho < r_j$ and $\rho > s_j$ (\Lemm{l:rsexample}).

The adversary presents the instances $x_1,x_2, \dots,x_T$, in that
order. For every $\rho \in t_T$, $\cost(A_\rho,x_j) = 1$ for all
$j$. However, at every step $t$, no algorithm can have a better than
$1/n$ chance of picking a $\rho_t$ for which $\cost(A_{\rho_t},x_t) =
\Omega(1)$, even given $x_1,x_2,\dots,x_{t-1}$ and full knowledge of
how the sequence is generated.
\end{proof}

\subsection{A Smoothed Analysis}\label{ss:ub}
\def\pn{\poly(n^{-1})}

Despite the negative result above, we can show a ``low-regret''
learning algorithm for MWIS under a slight restriction on how the
instances $x_t$ are chosen. By low-regret we mean that the regret can
be made polynomially small
as a function of the number of vertices $n$.
This is not the same as the no-regret condition, which requires
regret tending to 0 as $T \rightarrow \infty$.
Nevertheless, inverse polynomially small regret $\pn$
is a huge improvement
over the constant regret incurred in the worst-case lower bound
(Theorem~\ref{t:lb}).

We take the approach suggested by smoothed analysis~\cite{ST09}.
%
Fix a parameter $\sigma \in (0,1)$. We allow each MWIS instance $x_t$ to
have an
arbitrary graph on $n$ vertices, but we replace each vertex weight $w_v$ with
a probability distribution $\Delta_{t,v}$ with density
at most $\sigma^{-1}$ (pointwise) and support in $[0,1]$. A simple
example of such a
  distribution with $\sigma = 0.1$ is the uniform distribution
  on $[0.6, 0.65] \cup [0.82, 0.87]$.
To instantiate the instance $x_t$, we draw each vertex weight
  from its distribution $\Delta_{t,v}$.  We call such an instance a
  {\em $\sigma$-smooth MWIS instance}.

For small $\sigma$, this is quite a weak restriction.
As $\sigma \rightarrow 0$ we return to the worst-case setting,
and \Thmm{t:lb} can be extended to the case of $\sigma$ exponentially
small in $n$.
Here, we think of $\sigma$ as bounded below by an
(arbitrarily small) inverse polynomial function of $n$.
One example of such a smoothing
is to start with an arbitrary MWIS instance, keep the
first $O(\log n)$ bits of every weight, and set the remaining
lower-order bits at random.

The main result of this section is a polynomial-time low-regret learning
algorithm for sequences of $\sigma$-smooth MWIS instances. Our
strategy is to take a finite net $N \subset [0,1]$ such
that, for every algorithm $A_\rho$ and smoothed instance $x_t$,
with high probability over $x_t$ the performance of $A_{\rho}$ is
identical to that of some algorithm in
$\{A_\eta : \eta \in N\}$.
We can then use any
off-the-shelf no-regret algorithm to output a sequence of algorithms
from the finite set $\{A_\eta : \eta \in N\}$, and show the desired
regret bound.

\subsubsection{A Low-Regret Algorithm for \texorpdfstring{$\sigma$}{sigma}-Smooth MWIS}

We start with some definitions. For a fixed $x$, let $\t'(x)$ be the
set of \emph{transition points}, namely,%
\footnote{The corner cases $\rho = 0$ and $\rho = 1$ require straightforward but
  wordy special handling in this statement and in several others in this
  section. We omit these details to keep the argument free of clutter.}
\[ \tau'(x) := \{\rho : A_{\rho-\omega}(x) \ne A_{\rho+\omega}(x) \mbox{ for
  arbitrarily small } \omega\}. \]
It is easy to see $\t'(x) \subset \tau(x)$, where
\[ \tau(x) := \{\rho : w_{v_1}/k_1^\rho = w_{v_2}/k_2^\rho \mbox{
  for some } v_1,v_2,k_1,k_2\in [n]; \, k_1,k_2 \ge 2\}. \]
With probability 1, the vertex weights
$w_v$ are all distinct and non-zero, so we can rewrite
$\tau$ as
\[ \tau(x) := \left\{\rho(
v_1,v_2,k_1,k_2)  \
:\
v_1,v_2,k_1,k_2 \in [n];\, k_1,k_2 \ge 2;\, k_1 \ne k_2\right\}, \]
where
\begin{equation}\label{eq:rho}
\rho(v_1,v_2,k_1,k_2) = \frac{\ln(w_{v_1}) -
  \ln(w_{v_2})}{\ln(k_1) - \ln(k_2)}
\end{equation}
and $\ln$ is the natural logarithm function.
The main technical task is to show that no two elements of
$\tau(x_1) \cup \cdots \cup \tau(x_m)$ are within $q$ of each other, for a
sufficiently large $q$ and sufficiently large $m$, and with high enough
probability over the randomness in the weights of the $x_t$'s.

We first make a few straightforward computations.
The following brings the noise into log space.
\begin{lemma}\label{l:logdensity}
If $X$ is a random variable over $(0,1]$ with density at most $\delta$, then $\ln(X)$ also has density at most $\delta$.
\end{lemma}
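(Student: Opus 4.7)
The plan is to apply the standard change-of-variables formula for densities under a monotone smooth transformation. Since $\ln:(0,1]\to(-\infty,0]$ is strictly increasing and differentiable with inverse $y\mapsto e^y$, if $X$ has density $f_X$ then $Y=\ln(X)$ has density
\[
f_Y(y) \;=\; f_X(e^y)\cdot \left|\frac{d}{dy}e^y\right| \;=\; f_X(e^y)\cdot e^y
\]
for $y\in(-\infty,0]$, and $f_Y(y)=0$ elsewhere.

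From here the conclusion is immediate: because $Y$ is supported on $(-\infty,0]$, on the support we have $e^y\le 1$, so
\[
f_Y(y) \;=\; f_X(e^y)\cdot e^y \;\le\; \delta\cdot 1 \;=\; \delta,
\]
using the pointwise bound $f_X\le\delta$ in the hypothesis. Outside $(-\infty,0]$ the density is $0\le\delta$ trivially.

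The only ``obstacle'' is purely a matter of presentation — being careful that the hypothesis ``$X$ over $(0,1]$ with density at most $\delta$'' is interpreted as a pointwise bound on an (almost everywhere defined) density $f_X$ rather than, say, a bound on the CDF. Given that, the proof is a one-line change of variables; the key structural observation is that the Jacobian factor $e^y$ never exceeds $1$ on the image of $\ln$ restricted to $(0,1]$, which is exactly why passing to log space cannot inflate the density. No deeper machinery is needed.
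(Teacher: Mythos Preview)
Your proof is correct and essentially identical to the paper's own argument: both apply the change-of-variables formula to get $f_Y(y)=f_X(e^y)\cdot e^y$ and then use $e^y\le 1$ on $(-\infty,0]$ together with $f_X\le\delta$. The only difference is that you are slightly more explicit about the support and the Jacobian bound.
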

\begin{proof}
  Let $Y = \ln(X)$, let $f(x)$ be the density of $X$ at $x$, and let $g(y)$ be the density of $Y$ at $y$. Note that $X = e^Y$, and let $v(y) = e^y$. Then $g(y) = f(v(y)) \cdot v'(y) \le f(v(y)) \le \delta$ for all $y$.
\end{proof}

Since $|\ln(k_1) - \ln(k_2)| \le \ln n$, \Lemm{l:logdensity} and our definition of $\sigma$-smoothness implies the following.
\begin{corollary}\label{c:rhodensity}
For every $\sigma$-smooth MWIS instance $x$, and every $v_1,v_2,k_1,k_2
\in [n], \, k_1,k_2 \ge 2, \, k_1 \ne k_2$, the density of
$\rho(v_1,v_2,k_1,k_2)$ is bounded by $\sigma^{-1}\ln n$.
\end{corollary}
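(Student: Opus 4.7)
The plan is to compose three elementary density manipulations, following the structure of the expression $\rho(v_1,v_2,k_1,k_2) = (\ln w_{v_1} - \ln w_{v_2})/(\ln k_1 - \ln k_2)$. Lemma~\ref{l:logdensity} already handles the first step (passing from $w_v$ to $\ln w_v$), so the remaining work is to track the density through a difference and through a deterministic rescaling.

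First, since the $\sigma$-smoothness assumption draws each vertex weight $w_v$ independently from its own distribution $\Delta_{t,v}$, Lemma~\ref{l:logdensity} gives that $\ln w_{v_1}$ and $\ln w_{v_2}$ are independent random variables, each with density bounded by $\sigma^{-1}$. Next I would use the standard convolution bound: for independent $Y_1,Y_2$ with $Y_1$ having density at most $\delta$, the difference $Y_1 - Y_2$ has density
\[
f_{Y_1-Y_2}(z) \;=\; \int f_{Y_1}(z+y)\,f_{Y_2}(y)\,dy \;\le\; \delta\int f_{Y_2}(y)\,dy \;=\; \delta,
\]
so $Y := \ln w_{v_1} - \ln w_{v_2}$ also has density at most $\sigma^{-1}$.

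Finally, $\rho = Y/c$ for the fixed nonzero constant $c := \ln k_1 - \ln k_2$, and for any random variable $Y$ with density $f_Y$, the variable $Y/c$ has density $f_{Y/c}(r) = |c|\,f_Y(cr)$. Therefore the density of $\rho$ is pointwise at most $|c|\,\sigma^{-1}$. Since $k_1,k_2\in\{2,\dots,n\}$ we have $|c|=|\ln k_1 - \ln k_2|\le \ln n$, yielding the advertised bound $\sigma^{-1}\ln n$.

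I do not anticipate a real obstacle here; each of the three steps is a one-line calculation. The only mild care needed is (i) confirming that the independence of distinct $w_v$ under $\sigma$-smoothness justifies the convolution step, and (ii) noting that $k_1\neq k_2$ with $k_1,k_2\ge 2$ ensures $c\neq 0$, so the division by $c$ is well-defined and produces an honest density rather than a point mass (the degenerate case $v_1=v_2$, if it needs to be considered, can be excluded by the ``with probability $1$'' remark immediately preceding the corollary).
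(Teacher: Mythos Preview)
Your proposal is correct and is exactly the argument the paper has in mind; the paper only states it as a one-line hint (``Since $|\ln(k_1)-\ln(k_2)|\le \ln n$, Lemma~\ref{l:logdensity} and our definition of $\sigma$-smoothness implies the following''), and you have simply spelled out the implicit convolution and rescaling steps. Your caveat about the degenerate case $v_1=v_2$ is also apt and not addressed explicitly in the paper.
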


We now show that it is unlikely that two distinct elements of
$\tau(x_1) \cup \cdots \cup \tau(x_m)$ are very close to each other.

\begin{lemma}\label{lem:faraway}
Let $x_1,\dots, x_m$ be $\sigma$-smooth MWIS instances. The
probability that no two distinct elements of $\tau(x_1) \cup \cdots \cup
\tau(x_m)$ are within $q$ of each other is at least $1-4q
\sigma^{-1}m^2n^8\ln n$.
\end{lemma}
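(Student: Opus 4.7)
The plan is a union bound over pairs of tuples $(v_1,v_2,k_1,k_2)$ parameterizing the elements of $\tau(x_1)\cup\cdots\cup\tau(x_m)$. Crudely $|\tau(x_t)|\le n^4$, so the union has at most $mn^4$ tuples and at most $m^2n^8/2$ unordered pairs. I would bound, for each pair, the probability that the corresponding two $\rho$-values are distinct and within $q$ of each other, and sum.

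For a pair of tuples $T_1=(v_1,v_2,k_1,k_2)$ and $T_2=(v_3,v_4,k_3,k_4)$, I would use \eqref{eq:rho} to write
$$\rho_{T_1}-\rho_{T_2} \;=\; \sum_{v} c_v \ln w_v,$$
a linear combination of at most four of the $\ln w_v$ random variables, with coefficients $c_v$ depending only on the $k$'s. If $T_1,T_2$ parameterize distinct values (the only case the lemma concerns), some $c_v$ is nonzero; pick such a $v^*$. Conditioning on all other weights, $\rho_{T_1}-\rho_{T_2}$ becomes an affine function of $\ln w_{v^*}$ with slope $c_{v^*}$, so by \Lemm{l:logdensity} it has density at most $\sigma^{-1}/|c_{v^*}|$ and hence
$$\Pr\bigl[|\rho_{T_1}-\rho_{T_2}|<q\bigr] \;\le\; 2q\sigma^{-1}/|c_{v^*}|.$$

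It then suffices to lower-bound $|c_{v^*}|$ case-by-case. In the generic regime --- any pair in which some vertex in $\{v_1,v_2,v_3,v_4\}$ appears in exactly one of $T_1,T_2$, including all cross-instance pairs --- the coefficient at that vertex equals $\pm 1/(\ln k_a-\ln k_b)$, so $|c_{v^*}|\ge 1/\ln n$. Summed over all $\le m^2n^8/2$ pairs this contributes at most $q\sigma^{-1}m^2n^8\ln n$. The remaining degenerate regime is $\{v_1,v_2\}=\{v_3,v_4\}$ within a single instance, where the coefficient collapses to $c_{v^*}=\pm(1/(\ln k_1-\ln k_2)\pm 1/(\ln k_3-\ln k_4))$; this is either identically zero --- in which case $\rho_{T_1}\equiv\rho_{T_2}$ and the pair can be dropped --- or bounded below by $\Omega(1/(n^2\ln^2 n))$, using $|\ln(k_2k_3/(k_1k_4))|\ge 1/n^2$ whenever $k_2k_3\ne k_1k_4$. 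There are only $O(mn^6)$ such degenerate pairs, contributing $O(q\sigma^{-1}mn^8\ln^2 n)$, a lower-order term absorbed into the stated factor of $4$.

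The main obstacle is the degenerate regime: one must verify both that a vanishing coefficient forces $\rho_{T_1}\equiv\rho_{T_2}$ (so the pair indexes only one element of the union and may be omitted) and the inverse-polynomial lower bound on the nonzero degenerate coefficients. Once these technicalities are handled, adding the two regimes' contributions and taking the complement yields the claimed success probability $1-4q\sigma^{-1}m^2n^8\ln n$.
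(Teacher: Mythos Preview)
Your proposal is essentially the paper's proof: both union-bound over pairs of tuples, split by vertex overlap (your ``generic'' regime is the paper's Cases~1--2, your ``degenerate'' regime is its Case~3), and bound each pair's collision probability via the log-density lemma. One small caveat on the constant: your degenerate-regime coefficient bound $|c_{v^*}|\ge\Omega(1/(n^2\ln^2 n))$ yields a contribution $O(q\sigma^{-1}mn^8\ln^2 n)$, which is not literally lower-order than $q\sigma^{-1}m^2n^8\ln n$ unless $m\gtrsim\ln n$; the paper instead asserts $\bigl|\tfrac{1}{\ln k_1-\ln k_2}-\tfrac{1}{\ln k'_1-\ln k'_2}\bigr|\ge 1/n^2$ directly, giving a degenerate contribution of only $2q\sigma^{-1}mn^8$ and making the factor of~$4$ work for every $m\ge 1$.
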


\begin{proof}
Fix instances $x$ and $x'$, and choices of
$(v_1,v_2,k_1,k_2)$ and $(v'_1,v'_2,k'_1,k'_2)$.
Denote by $\rho$ and $\rho'$ the corresponding random variables,
defined as in~\eqref{eq:rho}.
%
We compute the probability that $|\rho-\rho'| \le q$ under various
scenarios, over the randomness in the vertex weights.
We can ignore the case where $x=x'$, $v_1=v_1'$, $v_2=v_2'$, and
$k_1/k_2 = k'_1/k'_2$, since then $\rho=\rho'$ with probability~1.
We consider three other cases.

\vspace{.75\baselineskip}
\noindent
\textbf{Case 1:}
Suppose $x \ne x'$, and/or $\{v_1,v_2\}$ and $\{v'_1, v'_2\}$
don't intersect. In this case, $\rho$ and $\rho'$ are independent
random variables. Hence the maximum density of $\rho-\rho'$ is at most
the maximum density of $\rho$, which is $\sigma^{-1}\ln n$ by
\Labb{Corollary}{c:rhodensity}. The probability that $|\rho-\rho'| \le
q$ is hence at most $2q\cdot \sigma^{-1}\ln n$.

\vspace{.75\baselineskip}
\noindent
\textbf{Case 2:}
Suppose $x=x'$, and $\{v_1, v_2\}$ and $\{v'_1, v'_2\}$ share
exactly one element, say
$v_2 = v'_2$.
Then $\rho-\rho'$
has the form $X-Y$, where $X =
\frac{\ln(w_{v_1})}{\ln(k_1) - \ln(k_2)}$ and $X$ and $Y$ are
independent. Since the maximum density of $X$ is at most
$\sigma^{-1}\ln
n$ (by \Lemm{l:logdensity}), the probability that $|\rho-\rho'| \le q$
is again at most $2q\cdot \sigma^{-1}\ln n$.

\vspace{.75\baselineskip}
\noindent
\textbf{Case 3:} Suppose $x=x'$ and $\{v_1,v_2\} = \{v'_1, v'_2\}$.
In this case, $k_1/k_2 \neq k'_1/k'_2$.
Then
\begin{align*}
 |\rho - \rho'|
& = \left| \big(\ln(w_{v_1}) - \ln(w_{v_2})\big)\left(\frac{1}{\ln(k_1)-\ln(k_2)} - \frac{1}{\ln(k'_1) - \ln(k'_2)}\right) \right| \\
& \ge \frac{|\ln(w_{v_1}) - \ln(w_{v_2})|}{n^2}.
\end{align*}
Since $w_{v_1}$ and $w_{v_2}$ are independent, the maximum
density of the right hand side is at most $\sigma^{-1}n^2$, and hence
the probability that $|\rho-\rho'| \le q$ is at most
$2q\cdot\sigma^{-1}n^2$.

We now upper bound the number of tuple pairs
that can appear in each case above.
Each set $\tau(x_i)$ has at most $n^4$ elements, so there are at
most
$m^2n^8$ pairs in Cases 1
and 2. There are at most
$n^4$
choices of $(k_1,
k_2, k'_1, k'_2)$ for each $(x, v_1, v_2)$ in Case 3, for a total of
at most $mn^6$ pairs.
The theorem now follows from the union bound.
\end{proof}

Lastly, we formally state the existence of no-regret algorithms
for the case of finite $|\A|$.
\begin{fact}[E.g.~\cite{LW94}]\label{f:mwexist}
For a finite set of algorithms $\A$, there exists a randomized online
learning algorithm $L^*$ that, for every $m > 0$, has expected
regret at most $O(\sqrt{(\log |\A|)/m})$ after seeing $m$ instances.
If the
time cost of evaluating $\cost(A,x)$ is bounded by $B$, then this
algorithm runs in $O(B|\A|)$ time per instance.
\end{fact}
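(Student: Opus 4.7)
The plan is to invoke the classical multiplicative weights (a.k.a.\ Hedge) algorithm. The algorithm $L^*$ maintains a weight $w_A^{(t)}$ for each $A \in \A$, initialized to $w_A^{(1)} = 1$. At round $t$, it samples an algorithm from the distribution $p^{(t)}_A := w_A^{(t)}/\Phi^{(t)}$, where $\Phi^{(t)} := \sum_{A \in \A} w_A^{(t)}$. After observing the instance $x_t$ and computing $\cost(A,x_t) \in [0,1]$ for every $A$, it updates $w_A^{(t+1)} := w_A^{(t)} \cdot e^{\eta \cost(A,x_t)}$ for a parameter $\eta > 0$ to be chosen.

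The analysis is the standard potential-function argument. For the upper bound on $\Phi^{(m+1)}$, use the inequality $e^{\eta c} \le 1 + (e^\eta - 1)c$ valid for $c \in [0,1]$ to get
\[
\Phi^{(t+1)} \le \Phi^{(t)}\bigl(1 + (e^\eta - 1)\, \expect[A \sim p^{(t)}]{\cost(A,x_t)}\bigr),
\]
and iterate, obtaining $\Phi^{(m+1)} \le |\A| \exp\bigl((e^\eta - 1) \sum_{t=1}^m \expect[A \sim p^{(t)}]{\cost(A,x_t)}\bigr)$. For the lower bound, $\Phi^{(m+1)} \ge w_{A^*}^{(m+1)} = \exp\bigl(\eta \sum_t \cost(A^*,x_t)\bigr)$ for any fixed $A^* \in \A$, in particular the best one in hindsight. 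Taking logarithms, dividing by $\eta$, and using $e^\eta - 1 \le \eta + \eta^2$ for small $\eta$ rearranges to
\[
\sup_{A \in \A}\sum_{t=1}^m \cost(A,x_t) - \sum_{t=1}^m \expect[A \sim p^{(t)}]{\cost(A,x_t)} \le \frac{\ln |\A|}{\eta} + \eta m.
\]
Choosing $\eta = \sqrt{(\ln |\A|)/m}$ balances the two terms and yields expected regret $O(\sqrt{(\log |\A|)/m})$ after division by $m$.

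For the running time, each round requires: (i) evaluating $\cost(A,x_t)$ for every $A \in \A$, which takes $O(B|\A|)$ time by hypothesis; (ii) a single pass over $\A$ to multiplicatively update each weight, in $O(|\A|)$ time; and (iii) sampling from the distribution $p^{(t)}$, which can be done in $O(|\A|)$ via the standard inverse-CDF method. The dominant cost is (i), giving $O(B|\A|)$ per instance as claimed.

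There isn't really a substantive obstacle here; this is a textbook result, and the only minor wrinkles are conventional. Because $\cost$ here represents a reward rather than a loss, the weight update uses $+\eta \cdot \cost$ rather than $-\eta \cdot \cost$ and the inequality direction is preserved throughout; and because $\cost$ takes values in $[0,1]$ (the paper fixes $H=1$ in the online section), the linearization $e^{\eta c} \le 1 + (e^\eta - 1)c$ applies without rescaling. One could equally well cite \cite{LW94} directly, as the paper does.
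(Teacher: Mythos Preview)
Your argument is correct and is precisely the standard multiplicative-weights (Hedge) analysis; the paper does not prove this statement at all but simply records it as a known fact with a citation to~\cite{LW94}. So there is nothing to compare against --- your write-up supplies a proof where the paper elects to cite.
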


We can now state our main theorem.
\begin{theorem}[Online Learning of Smooth MWIS]
There is an online learning algorithm
for $\sigma$-smooth MWIS
that runs in time
$\poly(n, \sigma^{-1})$ and has expected regret
at most $\pn$
(as $T \rightarrow \infty$).
\end{theorem}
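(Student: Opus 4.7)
The plan is to reduce online learning over the continuous parameter space $[0,1]$ to a finite-action problem, and then invoke the algorithm from \Labb{Fact}{f:mwexist}. Fix a net spacing $q$ to be tuned later and let $N = \{0, q, 2q, \ldots\} \cap [0,1]$, so $|N| = O(1/q)$. Instantiate the algorithm $L^*$ of \Labb{Fact}{f:mwexist} on the finite action set $\A_N = \{A_\eta : \eta \in N\}$. Because evaluating $\cost(A_\eta, x)$ just runs one MWIS greedy heuristic in $\poly(n)$ time, each iteration of $L^*$ takes time $\poly(n, 1/q)$.

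The crux is to show that, with high probability over the smoothed weights, the best continuous $\rho \in [0,1]$ does no better in aggregate than the best $\eta \in N$. For this, call the event that every pair of distinct transition points in $\tau(x_1) \cup \cdots \cup \tau(x_T)$ is more than $q$ apart the \emph{good event}; by \Labb{Lemma}{lem:faraway} it holds with probability at least $1 - 4q\sigma^{-1}T^2 n^8 \ln n$. I claim that on the good event, every $\rho \in [0,1]$ admits some $\eta \in N$ with $\cost(A_\rho,x_t) = \cost(A_\eta, x_t)$ for all $t$ simultaneously. Letting $\eta_L \le \rho \le \eta_R$ be the bracketing net points ($\eta_R - \eta_L = q$), the closed interval $[\eta_L,\eta_R]$ contains at most one transition point from $\tau(x_1) \cup \cdots \cup \tau(x_T)$; hence at least one of $[\eta_L,\rho]$, $[\rho,\eta_R]$ is entirely transition-free, and on the transition-free side $A_\rho$ and the corresponding net-point algorithm cross exactly the same transitions (i.e., make the same comparison outcomes) on every $x_t$, and so produce identical independent sets.

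Combining the pieces, under the good event $\max_{\rho \in [0,1]} \sum_t \cost(A_\rho,x_t) = \max_{\eta \in N} \sum_t \cost(A_\eta,x_t)$, so \Labb{Fact}{f:mwexist} yields expected average regret $O(\sqrt{(\log|N|)/T})$ against the continuous best; outside the good event the regret is at most $1$. Hence
\[
\mathbf{E}[\mathrm{regret}] \;\le\; O\!\left(\sqrt{(\log|N|)/T}\right) + 4q\sigma^{-1}T^2 n^8 \ln n.
\]
Tuning $q$ to be a sufficiently small inverse polynomial in $n$ (and in $T$ if needed by the doubling trick mentioned in \Secc{ss:online}) drives both terms to $\pn$ as $T \to \infty$, while $|N|$ stays polynomial in $n, \sigma^{-1}$ so the per-iteration time remains $\poly(n, \sigma^{-1})$.

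The hard part is the combinatorial equivalence step in the middle paragraph. A one-sided argument using only the nearest net point fails, since a single transition sitting between $\rho$ and its nearest $\eta$ already breaks equivalence on any instance whose $\tau(x_t)$ contains that transition. Having both $\eta_L$ and $\eta_R$ available, together with the pairwise $>q$-separation of transitions across \emph{all} instances guaranteed by \Labb{Lemma}{lem:faraway}, is what makes the two-sided argument go through; and it is ultimately the smoothness parameter $\sigma$, via the density bound in \Labb{Corollary}{c:rhodensity}, that buys the separation and hence the regret bound.
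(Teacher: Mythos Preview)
Your overall architecture matches the paper's: discretize $[0,1]$ to a $q$-net $N$, run the finite-action learner $L^*$ of \Labb{Fact}{f:mwexist} on $\A_N$, and use \Lem{faraway} to argue that on a high-probability ``good event'' the best net point is as good as the best continuous $\rho$. Your two-sided bracketing argument (take whichever of $[\eta_L,\rho]$, $[\rho,\eta_R]$ is transition-free) is correct and in fact more explicit than the paper's one-line claim.

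The gap is in the parameter tuning, and it is not merely cosmetic. You invoke \Lem{faraway} over all $T$ rounds, so the bad-event probability carries a $T^2$ factor. To keep that term at $\poly(n^{-1})$ you are forced to take $q$ inverse-polynomial in $T$; but then $|N| = \Theta(1/q)$ is polynomial in $T$, and the per-iteration cost of $L^*$ (which is $\Theta(|N|)$ heuristic evaluations) is no longer $\poly(n,\sigma^{-1})$ as the theorem requires. Your last sentence asserts simultaneously that $q$ may depend on $T$ and that $|N|$ stays $\poly(n,\sigma^{-1})$; since $|N|=\Theta(1/q)$, these cannot both hold. The doubling trick does not rescue this: within each epoch the net size, and hence the per-step work, still scales with the epoch length.

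The paper sidesteps this tension by \emph{fixing the analysis horizon} to $m = n^d \ln(\sigma^{-1})$ for a large constant $d$, rather than letting it run to $T$. With $m$ polynomial in $n$ and $\sigma^{-1}$, one can take $q = 1/(n^d \cdot 4\sigma^{-1}m^2 n^8 \ln n)$, which is inverse-polynomial in $n,\sigma^{-1}$ alone; then $|N|$ is genuinely $\poly(n,\sigma^{-1})$, the bad event has probability at most $n^{-d}$, and the $L^*$ regret $O(\sqrt{(\log|N|)/m})$ is already $\poly(n^{-1})$ at horizon $m$. The phrase ``as $T\to\infty$'' in the statement should be read as ``once $T$ reaches this polynomial $m$,'' not as a bound over an unbounded horizon --- that is precisely the point of the paper's earlier remark that this is a low-regret rather than a no-regret guarantee.
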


\begin{proof}
Fix a sufficiently large constant $d > 0$ and
consider the first $m$ instances of our sequence, $x_1,\dots,x_m$,
with $m = n^d\,\ln (\sigma^{-1})$.  Let
$q = 1/(n^d \cdot 4\sigma^{-1}m^2n^8\ln n)$. Let $E_q$ be the
event that every two distinct elements of $\tau(x_1) \cup \cdots \cup
\tau(x_m)$ are at least $q$ away from each other. By \Lem{faraway},
$E_q$ holds with probability at least $1-1/n^d$ over the randomness
in the vertex weights.

Now, let $\A_N = \{A_{i} : i \in \{0,q,2q,\dots, \lfloor 1/q\rfloor q,
1\}\}$ be a ``$q$-net.'' Our desired algorithm $L$ is simply the algorithm
$L^*$ from \Labb{Fact}{f:mwexist}, applied to $\A_N$. We now
analyze its expected regret.

If $E_q$ does hold, then for every algorithm $A\in \A$, there is an
algorithm $A'\in \A_N$
such that $\cost(A,x_t) = \cost(A',x_t)$ for $x_1,\dots,x_m$. In other
words, the best algorithm of $\A_N$ is no worse than the best
algorithm from all of $\A$, and in this case
the expected regret of $L$ is simply that of $L^*$.
By \Labb{Fact}{f:mwexist} and our choice of~$m$,
the expected regret (over the coin flips made by $L^*$)
is at most inverse polynomial in~$n$.

If $E_q$ does not hold, our regret is at most 1, since $\cost$ is
between 0 and 1.
Averaging over the cases where~$E_q$ does and does not hold (with
probabilities $1-1/n^d$ and $1/n^d$),
the expected regret of the learning algorithm~$L$ (over the randomness in
$L^*$ and in the instances) is at most inverse polynomial in~$n$.
%
\end{proof}

\section{Conclusions and Future Directions}\label{s:conc}

Empirical work on application-specific algorithm selection has far
outpaced theoretical analysis of the problem, and this paper takes
an initial step towards redressing this imbalance.
We formulated the problem as one of learning the best
algorithm or algorithm sequence from a class with respect to an
unknown input distribution or input sequence.
Many state-of-the-art empirical approaches to algorithm selection map
naturally to instances of our learning frameworks.
This paper demonstrates that many well-studied classes of algorithms have
small pseudo-dimension, and thus it is possible to learn a
near-optimal algorithm from a relatively modest amount of data.
While worst-case guarantees for no-regret
online learning algorithms are impossible, good online
learning algorithms exist in a natural smoothed model.

Our work suggests numerous wide-open research directions worthy of
further study.  For example:
\begin{enumerate}

\item Which computational problems admit a class
of algorithms that simultaneously has low representation error and
small pseudo-dimension (like in \Secc{ss:sorting2})?

\item Which algorithm classes can be learned online, in either a
  worst-case or a smoothed model?

\item
When is it possible to learn a near-optimal algorithm using only a
polynomial amount of computation, ideally with
a learning algorithm that is better than
brute-force search?  Alternatively, are there (conditional) lower
bounds stating that brute-force search is necessary for
learning?\footnote{Recall the discussion in \Secc{ss:gd}: even
  in practice, the state-of-the-art for application-specific algorithm
  selection often boils down to brute-force search.}


\item
Are there any non-trivial relationships between statistical
learning measures of the complexity of an algorithm class
and more traditional computational complexity measures?

\item
How should instance features be chosen to minimize the representation
error of the induced family of algorithm selection maps (cf.,
\Secc{ss:features})?

\end{enumerate}

\subsection*{Acknowledgements}

We are grateful for the many helpful
comments provided by the anonymous SICOMP and ITCS reviewers.

\bibliographystyle{abbrv}
\bibliography{features}

\appendix

\section{A Bad Example for Gradient Descent}\label{app:gd}

We depict a family $\mathcal{F}$ of real-valued functions
(defined on the plane $\RR^2$)
for which the class $\A$
of gradient descent algorithms from Section~\ref{ss:gd2} has
infinite pseudo-dimension.
We parameterize each function $f_I \in \mathcal{F}$
in the class by a finite subset $I\subset
[0,1]$.  The ``aerial view'' of $f_I$ is as follows.


\medskip
\begin{center}
\includegraphics[width=4in]{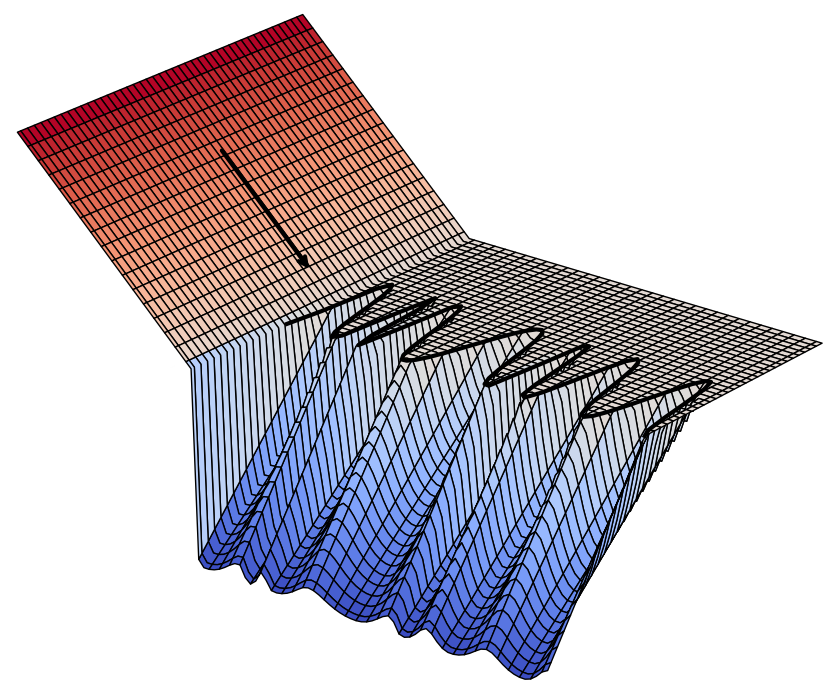}
\end{center}
\medskip

The ``squiggle'' $s(I)$ intersects the relevant axis at exactly $I$
(to be concrete, let $s(I)$ be the monic polynomial with roots at
$I$). We fix the initial point $z_0$ to be at the tail of the arrow
for all instances, and fix $\rho_\ell$ and $\rho_u$ so that the first
step of gradient descent takes $z_0$ from the red incline into the
middle of the black and blue area. Let $x_I$ be the instance
corresponding to $f_I$ with starting point $z_0$. If for a certain
$\rho$ and $I$, $g(z_0,\rho)$ lands in the flat, black area, gradient
descent stops immediately and $\cost(A_\rho, x_I) = 1$. If
$g(z_0,\rho)$ instead lands in the sloped, blue area, $\cost(A_\rho,
x_I) \gg 1$.


It should be clear that $\mathcal{F}$ can shatter any finite subset of
$(\rho_\ell,\rho_u)$, and hence has infinite pseudo-dimension. One can
also make slight modifications to ensure that all the functions in
$\mathcal{F}$ are continuously differentiable and $L$-smooth.

\typeout{Get arXiv to do 4 passes: Label(s) may have changed. Rerun}
\end{document}